\newtheorem{remark}{Remark}
\newtheorem{theorem}{Theorem}
\renewenvironment{proof}{{\bfseries Proof.}}{\qed}
\algnewcommand{\Initialize}[1]{%
	\State \textbf{Initialize:}
	\Statex \hspace*{\algorithmicindent}\parbox[t]{.8\linewidth}{\raggedright #1}
}
\newcommand{\argmax}{\arg\max}
\newcommand{\infin}{\infty}
\def\BibTeX{{\rm B\kern-.05em{\sc i\kern-.025em b}\kern-.08em
		T\kern-.1667em\lower.7ex\hbox{E}\kern-.125emX}}
\begin{document}
	
	\title{Cooperative guidance of multiple missiles: a hybrid co-evolutionary approach}
	
	\author{Xuejing Lan, Junda Chen, Zhijia Zhao,~\IEEEmembership{Member,~IEEE,} and Tao Zou
		
		\thanks{The author(s) received no financial support for the research, authorship, and/or publication of this article. (Xuejing Lan and Junda Chen contributed equally to this work)(Corresponding author: Zhijia Zhao)}
		\thanks{X. J. Lan, J. D. Chen, Z. J. Zhao, and T. Zou are with the School of Mechanical and Electrical Engineering, Guangzhou University, Guangzhou 510006, China (e-mail:  lanxj@gzhu.edu.cn; CJD@e.gzhu.edu.cn; zhjzhaoscut@163.com; tzou@gzhu.edu.cn).}
	}
	\maketitle
	
	\begin{abstract}
		Cooperative guidance of multiple missiles is a challenging task with rigorous constraints of time and space consensus, especially when attacking dynamic targets. In this paper, the cooperative guidance task is described as a distributed multi-objective cooperative optimization problem. To address the issues of non-stationarity and continuous control faced by cooperative guidance, the natural evolutionary strategy (NES) is improved along with an elitist adaptive learning technique to develop a novel natural co-evolutionary strategy (NCES). The gradients of the original evolutionary strategy are rescaled to reduce the estimation bias caused by the interaction between the multiple missiles. 
		A hybrid co-evolutionary cooperative guidance law (HCCGL) is then developed by integrating the highly scalable co-evolutionary strategy and the proportional guidance law, with detailed convergence proof provided. Finally, simulations demonstrated the effectiveness and superiority of this guidance law in solving cooperative guidance tasks with high accuracy, with potential applications in other multi-objective optimization, dynamic optimization, and distributed control scenarios. 

	\end{abstract}
	
	\begin{IEEEkeywords}
		Optimal control; cooperative guidance; evolutionary strategy; multi-objective optimization
	\end{IEEEkeywords}
	
	\section{Introduction}
	\label{sec:introduction}
	\IEEEPARstart{M}{odern} penetration of air defense systems of the target requires coordinated attacks with multiple missiles. However, the rapid development of detection technologies and close-in weapon systems (CIWS) has decreased the chances of successful impact with a single conventional missile.~\cite{jeon2010homing}. In addition to increasing the difficulty of interception, the cooperative guidance strategy of multiple missiles is also crucial to the lethal effect of the final impact.
	Usually, the cooperative guidance of multiple missiles belongs to the phase of terminal guidance, where accurate target information can be obtained with active radar systems or other detection devices. The existing cooperative guidance laws can be roughly divided into two categories. One is the analytical method to find closed-form solutions, which is mainly based on sliding mode control, optimal control, and multi-agent consensus theory. The other is the intelligent method which generally adopts heuristic intelligent optimization algorithm and reinforcement learning (RL) theory. 
	
	The analytical cooperative guidance method has been proven to be robust and efficient for practical application~\cite{ma2013guidance,xiong2018hyperbolic,liRobustCooperativeGuidance2019,he2021computational,ratnoo2008impact}. Based on fundamental proportional navigation (PN), Jeon et. al developed cooperative proportional navigation (CPN) where the on-board time-to-go of the missile is used as the navigation gain~\cite{jeon2010homing}. It is a simple but effective approach for achieving time consensus. Ma developed a composite guidance law, which can be decomposed into the direction along the line of sight (LOS) and the direction perpendicular to LOS~\cite{ma2013guidance}, corresponding to time and space cooperative respectively. Furthermore, time cooperative control is achieved with the combination of PNG and impact time error feedback~\cite{jeon2006impact}, where the undirected topology is adopted to establish communication relationships. Based on the optimal control approach, a variant of the hyperbolic tangent function is proposed in \cite{xiong2018hyperbolic} to force early control of velocity and impact angle.

	However, with the increasing demand for developing high-precision weapon systems, intelligent cooperative guidance method is increasingly regarded as a necessary auxiliary option. 
	In recent years, the reinforcement learning theory has attracted much attention because of its ability to learn online based on environmental feedback~\cite{gaudet2020reinforcement,la2014multirobot,dong2021composite,kong2020maneuver, albaba2021driver, chen2021communication}.
	According to the training structures, existing reinforcement learning algorithms for multi-agent systems can be roughly divided into four types, which are \emph{Fully decentralized training, decentralized execution;   Fully centralized training, decentralized execution; Centralized training, centralized execution, and value decomposition methods}.
	Some of these algorithms have achieved satisfactory results in coping with problems with low complexity and accuracy requirements.
	In \cite{dong2020reinforcement}, \cite{he2021computational}, and  \cite{Liangchen2021Metalearning}, the state-of-the-art reinforcement learning frameworks have demonstrated their effectiveness in the guidance task. Zhang et.al proposed a gradient-descent-based reinforcement learning method in the actor-critic framework and achieved consensus control for multi-agent systems by following a tracking leader~\cite{zhang2016data}.  
	But the two challenges of \emph{Nonstaionarity}  and \emph{Partial Observability} \cite{nguyen2020deep} will lead to saturated output or coordination loss of multi-agent systems, which greatly reduces the accuracy of the value function. In addition, the use of value function in reinforcement learning is not suitable for continuous control tasks with large search spaces. Thus, these limitations of RL impede the development of reinforcement learning in cooperative guidance. 
	
	It is an excellent way to solve the above problems by removing the value function of reinforcement learning and optimizing in solution space with evolutionary strategy (ES), which is more robust and invariant to real-time rewards because it optimizes towards the objective function directly~\cite{brockhoff2010mirrored}. Moreover, as described in \cite{salimans2017evolution}, ES is tolerant of long horizontal and implicit solutions, which is exactly consistent with the need for cooperative guidance. 
	The natural evolutionary strategy (NES) is the latest branch of ES, and shows good performance in solving high-dimensional continuous multimodal optimization problems, by using the natural gradient information estimated according to the fitness expectation of the population~\cite{brockhoff2010mirrored,wierstra2014natural,salimans2017evolution}. 
	Similar algorithms named co-evolutionary algorithm have been discussed in \cite{xu2017environment} and \cite{qu2013improved}, which focus on solving multi-objective optimization problems by dividing the overall objective into sub-objectives, such to optimize and evaluate together. Another idea is to evolve multiple populations for the same goal, and manually regulate the constraints of each population for faster convergence or fuller exploration~\cite{qu2013improved}. As represented in \cite{qu2013improved}, the concept of co-evolution refers to multi-threads of training processes. Note that these methods do not use the natural gradient information as in NES, and the non-stationary issue discussed above is not considered. 
	
	When optimizing in continuous parameter(solution) space, it is very important to apply adaptive technology. While a learning rate adaption method based on the quality of gradients is often not easy to estimate, a simple workaround would be leveraging the shifting distance of parameters to adapt the learning rate. As shown in \cite{wang2022instance}, the size of population was adjusted depending on the novelty metric and quantity metric, which reflected the complexity of the dynamic environment. The estimation of distribution algorithm (EDA) was applied to continuous control by searching the optimal parameter distribution~\cite{larranaga2001estimation,karshenas2013regularized}.  A variety of evolutionary methods were investigated to design the multi-objective missile guidance law~\cite{omar2011multiobjective}. Maheswaranathan proposed a surrogate gradient to reduce the evaluation costs~\cite{maheswaranathan2019guided}. These works reveal the enormous potential of searching in parameter space, rather than directly searching in parameter space.

	Therefore, an NES-based co-evolutionary algorithm naming as the natural co-evolutionary strategy (NCES) is developed in this paper to distress the dilemma faced by RL in the cooperative guidance task. Considering the advantages of searching in parameter space, the co-evolutionary algorithm is improved in this work by rescaling the gradient information to reduce the estimation bias introduced by neighboring populations. As discussed in \cite{del2019bio, gray2018multiagent}, most of today's bio-inspired algorithm innovations are based on experimental observation rather than meticulous theoretical support. Whereas in this work, we try to dig into the depths of complex optimization and provide proof as sensible as possible through the presentation of graphs and deduction. 
	Via integrating the NCES algorithm, a hybrid co-evolutionary cooperative guidance law (HCCGL) is further developed to solve the challenging missile guidance problem. Extensive empirical results on various engagement scenarios verified the effectiveness of the proposed guidance law.
	The main contributions of this work are summarized as follows:
	\begin{enumerate}[]
		\item To address the issues of non-stationarity and continuous control faced by cooperative guidance, an NCES algorithm is formulated and incorporated into a novel guidance law as an alternative to RL in the cooperative guidance task.
		
		\item The rigorous constraints of time and space consensus in cooperative guidance are integrated and designed as the fitness function for each missile. An MLP-based policy network is constructed and learned to optimize the fitness function.
		
		\item The proposed HCCGL  has advantages in achieving high precision for cooperative guidance tasks, even with dynamic targets and random initial conditions.
		
	\end{enumerate}
	
	The rest of the paper is organized as follows. The problem formulation is elaborated in \prettyref{sec:problemFormulation}, and the proposed cooperative guidance law is discussed in \prettyref{sec:naturalCo-evolutionaryStrategy}. In \prettyref{sec:simulations}, experiments under various configurations are implemented. Finally, conclusions are made in \prettyref{sec:conclusions}.

	\section{Problem Formulation}
	\label{sec:problemFormulation}
	
	\subsection{Engagement geometry}
	The two-dimensional engagement geometry between multiple missiles and one target is shown in \prettyref{fig:engagement1}, where the inertial coordinate frame $OXY$ represents the horizontal plane.
	There are $n$ missiles in total. The index $M_i$ denotes the $i^{th}$ missile, and $T$ represents the target. $ V_{mi}, \Xi_{mi}, \alpha_{mi}$, and $\delta_{mi}$ represent the velocity, line of sight (LOS) angle, flight-path angle, and heading angle of the $i^{th}$ missile, respectively. $a_{li}$ and $a_{vi}$ represent the lateral acceleration and the thrust acceleration to be designed for the $i^{th}$ missile, which are perpendicular to and align with the direction of $V_{mi}$, respectively. $V_T, \Xi_{T}, \alpha_{T}$, and $\delta_{T}$ are the velocity, LOS angle, flight-path angle, and heading angle of the target, respectively. The lateral acceleration of the target is denoted by $a_T$. 
	\begin{figure}[htbp]
		\centering
		\adjincludegraphics[scale=0.5,trim={0.1\width, 0, 0, 0}, clip]{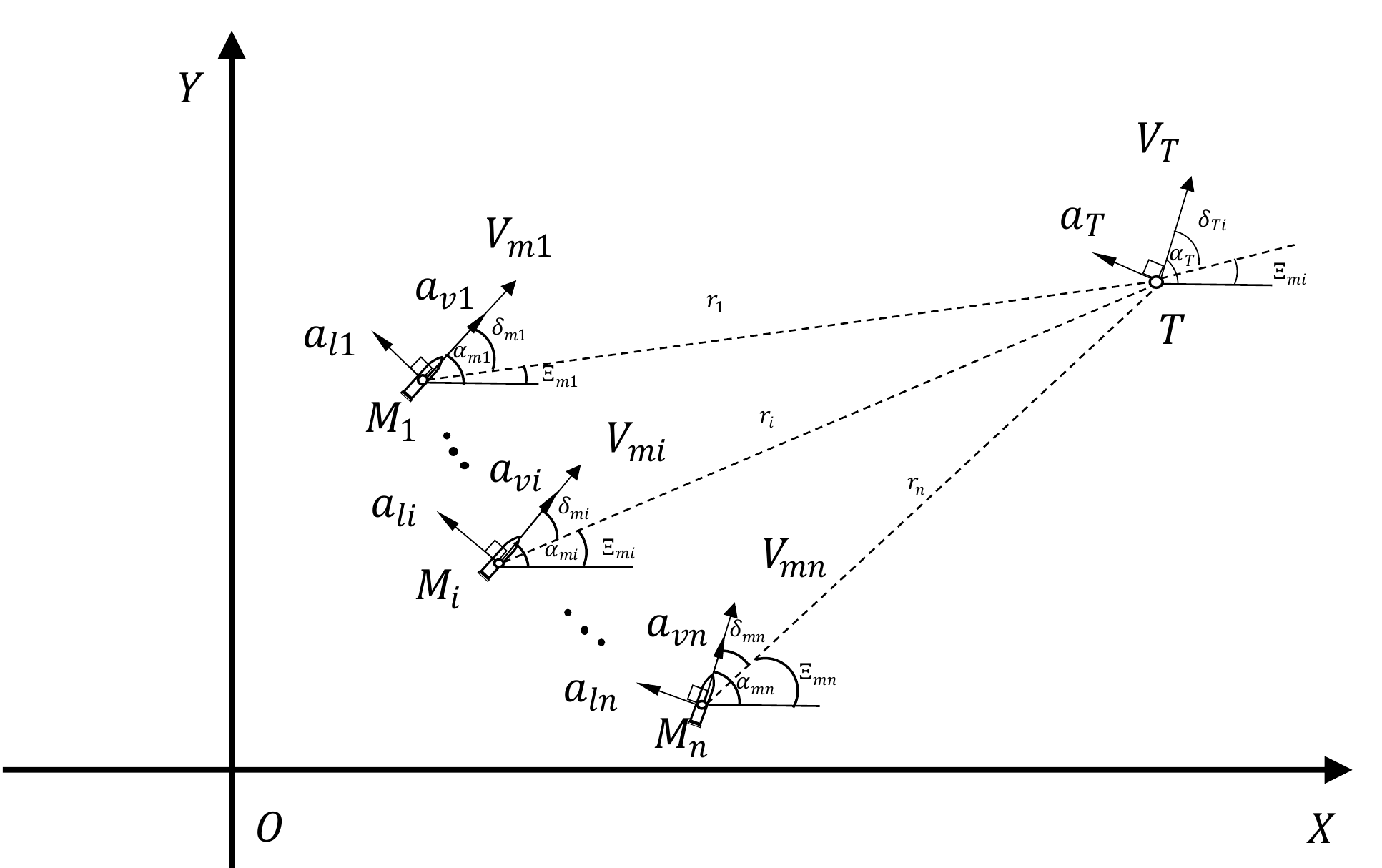}
		\caption{Two-dimensional engagement geometry}
		\label{fig:engagement1}
	\end{figure}
	
	The dynamic equations of the $i^{th}$ missile and the target are as follows:
	\begin{equation}\label{eqn:nonstatsystemdynamic}
		\begin{cases}
			\dot{r_i} = -V_{mi}\text{cos}\,\delta_{mi}+V_{T}\text{cos}\,\delta_{Ti}\\
			r_i\dot{\Xi}_{mi}=-V_{mi}\text{sin}\,\delta_{mi}+V_{T}\text{sin}\,\delta_{Ti}\\
			\dot{\alpha_{mi}}=a_{li}/V_{mi}\\
			\dot{\alpha_T}=a_T/V_T\\
			\dot{V}_{mi}=a_{vi}\\
			\delta_{mi}= \alpha_{mi}-\Xi_{mi}\\
			\delta_{Ti}=\alpha_{T}-\Xi_{mi}
		\end{cases},
	\end{equation}
	where, $r_i$ represents the relative range between the $i^{th}$ missile and  the target. 
	The time-to-go of the $i^{th}$ missile $t_{go}^i$ refers to the time left from the current time until the interception:
	\begin{equation}
		\label{eqn:ttg}	
		t_{go}^i=-\frac{r_i}{\dot{r_i}},
	\end{equation}

	\subsection{Communication Topology}
	The communication relationship of the multiple missiles is depicted by a topology, where a set of nodes  $\mathcal{V} = \{v_1, v_2,...,v_n \}$ represents the $n$ missiles. The communications are represented by a set of edges $\xi \subseteq \mathcal{V}\times \mathcal{V}$ with an adjacency matrix $A=[a_{ij}]\in \mathbb{R}^{n\times n}$, where $a_{ij}=1$  if missile $j$ is able to communicate directly with missile $i$ , otherwise $a_{ij}=0$. $\mathcal{N}_i=\{j\in\mathcal{V}:(i,j)\in\epsilon\}$ is the set of neighboring missiles of the $i^{th}$ missile. In practical engineering, the communication topology is determined through comprehensive considerations of the communication cost and actual demand.
	In this work, the undirected topology shown in~\prettyref{fig:topologies} is adopted, enabling neighboring missiles to share information. 
	\begin{figure}
		\centering
		\includegraphics[width=0.2\textwidth]{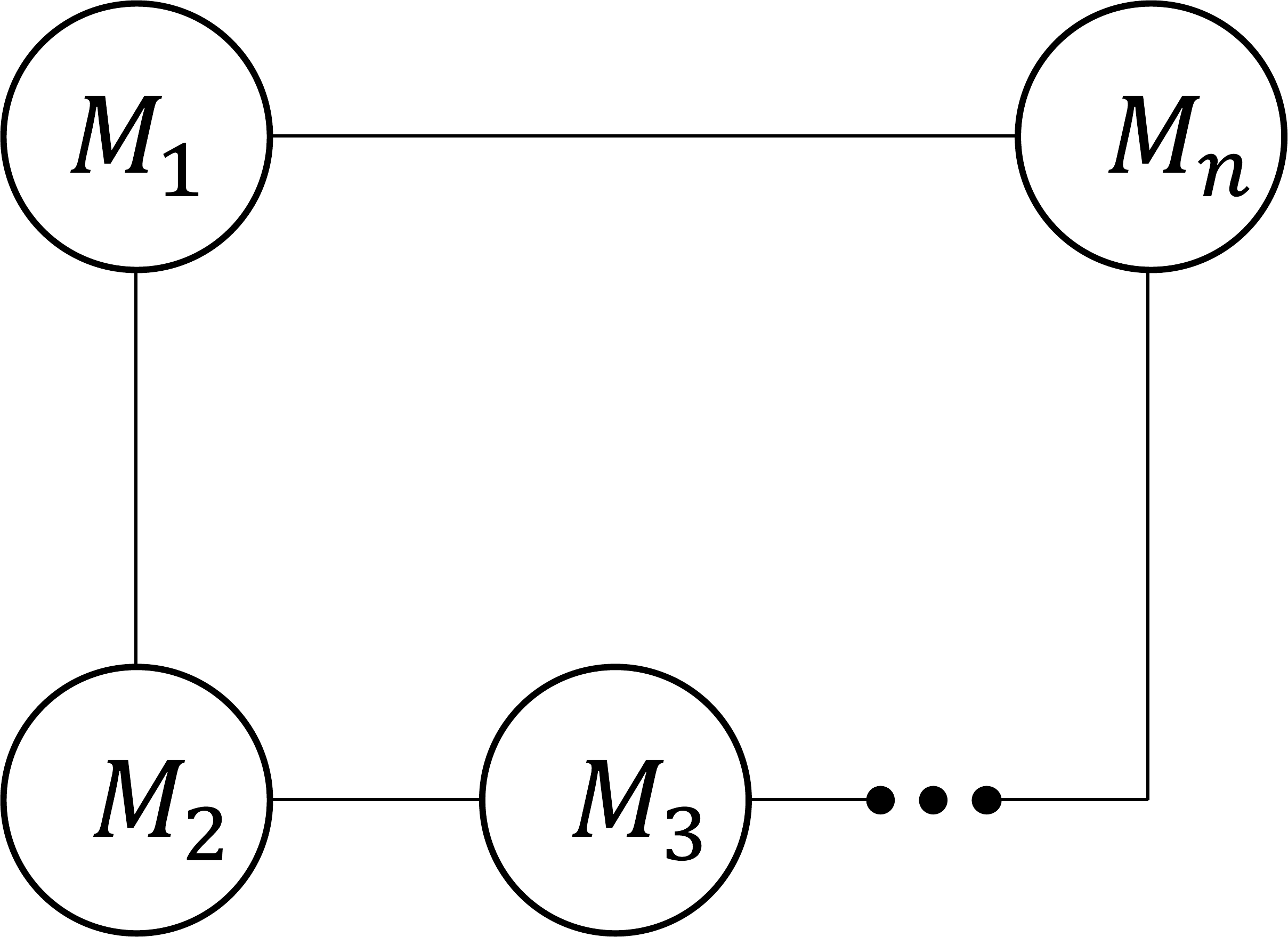}
		\caption{Communication topology}\label{fig:topologies}
	\end{figure}

	\subsection{Observation}
	For the multi-missile system, the complete observation information of the entire system is not available to each agent. Thus, the cooperative guidance problem is a partially observable Markov decision process (POMDP) described by
	\begin{equation}\label{eqn:mdptransition}
		O_i \times A_i \to O_i',
	\end{equation}
	where,  $O_i$ and $A_i$ represent the observation and action of the $i^{th}$ missile. $O_i'$ is the observation of the $i^{th}$ missile at next time step.
	
	The full state information of each missile consists of three components: personal features, target features, and error features shown in Table \ref{tab:observation}. $P_{mi}$ and $P_T $ represent the positions of the missile $i$ and the target in two-dimensional coordinates. The target features are estimated or detected through onboard equipment, and the estimation error is assumed to be negligible compared with the required guidance precision. The acquisition of accurate location information requires the support of powerful global positioning systems, here we only need relative error information. $e_t^i $ is the consensus error of time of the missile $i$:
	\begin{equation}\label{eqn:errordefin}
		e_t^i = \sum_{j\in \mathcal{N}_i} (t_{go}^i - t_{go}^j),
	\end{equation}
	The consensus error of LOS angle of the missile $i$ is defined as:
	\begin{equation}\label{eqn:errordefin}
		e_a^i = \sum_{j\in \mathcal{N}_i} (e_\Xi^i - e_\Xi^j),
	\end{equation}
	where, $e_\Xi^i= \Xi_{mi}-\Xi_{di}$ is the LOS angle error of the missile $i$, and $\Xi_{di}$ is the desired impact angle of missile $i$:
	\begin{equation} \label{eqn:theta_di}
		\Xi_{di}=\Xi_{d1} + \sum_{j=1}^{i-1}\delta_{d}^j,
	\end{equation}
	where, $\delta_{d}^i$ is the desired relative impact angle between two missiles, and $\Xi_{d1}$ is the nominal desired impact angle of the first missile which is determined online.
	To increase the flexibility and autonomy of the intelligent missile system, the desired $\Xi_{di}$ can be adjusted adaptably instead of being a fixed value.
	
	\begin{table}[ht]
		\begin{center}
			\caption{Full state information of each missile}
			\label{tab:observation}
			\begin{tabular}{ll}
				\hline\hline
				Features                            & Symbols             \\ \midrule
				\multirow{4}{*}{Personal Features} & $P_{mi}=(x_i, y_i)$ \\
				& $\alpha_{mi}$       \\
				& $\Xi_{mi}$    \\
				& $V_{mi}$            \\ \midrule
				\multirow{3}{*}{Target Features}    & $P_T = (x_T, y_T)$  \\
				& $V_T$               \\
				& $\alpha_T$          \\ \midrule
				\multirow{2}{*}{Error Features}     & $e^i_t$               \\
				& $e^i_a$               \\
				& $e_\Xi^i$               \\ \hline\hline
			\end{tabular}
		\end{center}
	\end{table}
	

	\subsection{Fitness evaluation}
	
	The reward of each missile at one evaluation step consists of a terminal reward and a flight reward. The objective of the cooperative guidance task is to minimize the error  $e_{t}^i$, $e_a^i $, and $e_\Xi^i$. 
	Then, the terminal reward is defined as:
	\begin{equation} 
		r_{T}^i=(\gamma_a\cdot e^{-\xi_a |e_\Xi^i|} + \gamma_t\cdot e^{-\xi_t |e_t^i |})\cdot \epsilon(k),
	\end{equation}
	where, $\xi_a$, $\xi_t$, $\gamma_a$, $\gamma_t$ are constant coefficients. $\epsilon(k)$ is the step function defined as
	\begin{equation} \label{eqn:stepfunc}
		\epsilon(k)= \begin{cases}
			1, &\text{if $k$ is terminal step}\\
			0, &\text{otherwise} 
		\end{cases}.
	\end{equation}
	Thus, the terminal reward only reflects the results at the terminal step, and $r_{T}^i = \gamma_a + \gamma_t$ if and only if $e_\Xi^i=0$ and $e_t^i=0$.
	The flight reward is defined as:
	\begin{equation} 
		r_{F}^i=\beta_a(-1+e^{-k_a|e_a^i|})+ \beta_t(-1+e^{-k_t|e_t^i|}),
	\end{equation}
	where, $k_a, k_t$, $\beta_a$, and $\beta_t$ are positive constant coefficients.
	It can be inferred that $r_{F}^i \le 0$ is always true. $r_{F}^i = 0$ if and only if $e_a^i=0$ and $e_t^i=0$.
	Then, the fitness function of missile $i$ for the cooperative guidance task is defined by
	\begin{equation} 
		\begin{aligned}
			F_i &= \int_t (r_{F}^i+r_{T}^i)  \text{d}t.
		\end{aligned}
	\end{equation}
	Thus, the objective of the cooperative guidance task can be achieved by maximizing the fitness function of each missile.
	
	
	\subsection{Design of the cooperative guidance law}
	Based on the requirements of the cooperative guidance task, the guidance law proposed in this paper includes two parts: the tracking control part and the consensus control part. 
	The tracking control part is obtained by proportional navigation guidance(PNG) :  
	\begin{equation} \label{eqn: trackingcontroller}
		u_{pi} = [\beta   \dot{\Xi}_{mi}V_{mi},0]^T,
	\end{equation}
	where, $\beta  $ is the navigation constant. Note that the tracking control part only designs the lateral acceleration.
	
	The consensus control part is modeled by a neural network expressed as
	\begin{equation} \label{eqn:forward1}
		\begin{aligned}
			u_{ei} &=W_{3i}^T \cdot \psi(Z_{2i}),\\
			Z_{2i} &= W_{2i}^T\cdot \phi(Z_{1i}),\\
			Z_{1i} &= W_{1i}^T\cdot \phi(X_i),
		\end{aligned}
	\end{equation}
	where $W_{3i}  \in \mathbb{R}^{q_2\times 2}$, $W_{2i}  \in \mathbb{R}^{q_1\times q_2}$, and $W_{1i}  \in \mathbb{R}^{3 \times q_1}$ denote the weight matrices of the output layer. $Z_{1i}$ and $Z_{2i}$ are the outputs of the first and second hidden layers. $q_1$  and $q_2$ are the numbers of neurons in each layer. $\psi(\cdot)$  is the bounded activation function $Tanh(\cdot)$ with $||\psi(\cdot)|| \leqslant a_{lmax}$, and $\phi(\cdot)$ is the common activation function $Sigmoid(.)$.
	The input state vector $X_i$ is selected as: 
	\begin{equation} \label{eqn:forward1}
		X_i=[ e_a^i, e_t^i, e_\Xi^i ]^T.
	\end{equation}
	Thus, the guidance law of the missile $i$ is presented as:
	\begin{equation} \label{eqn:composedcommand}
		u_i = (1-\eta)u_{pi} + \eta u_{ei},
	\end{equation}
	where $\eta$ is the guidance gain trading off the tracking control part and the consensus control part.

	\section{Natural Co-evolutionary Strategy }
	\label{sec:naturalCo-evolutionaryStrategy}
	
	\subsection{bottleneck of RL}
	Reinforcement learning is a generic term for a class of value-oriented algorithms. It focuses on solving problems of Markov Decision Process (MDP), which also apply to the guidance problem.
	
	Assume there are $n$ agents in the environment. The joint action set is denoted by $A=A_1\times A_2\times... \times A_n$, and the system state is denoted by $S$. At each timestep, each agent takes one step, and with a certain probability, the transition occurs. This can be represented as $S\times A \to S'$, where $S'$ is the next system state after the transition. The reward is given as $S \times A\times S' \to \mathbb{R}^n$.In a deterministic environment, the transition probability is 1. For the multi-agent system, the $S$ can be decomposed into individual observations: $S=O_1+O_2+...+O_n$. A sufficient set of observations must be capable of representing the complete system state. In most cases, the agent does not have access to the complete information of the system. This means they only get partial observations instead of the complete state, making the problem a Partial Observable Markov Decision Process (POMDP).
	
	Two challenges, \emph{Nonstationarity} and \emph{partial observability} \cite{nguyen2020deep}, impede the research for multi-agent systems. Tons of algorithms have popped up focusing on solving this kind of problem. According to the training process, we can roughly divide them into four types:
	\begin{itemize}
		\item  Fully decentralized training, decentralized execution;
		\item  Fully centralized training, decentralized execution;
		\item Centralized training, centralized execution;
		\item Value decomposition methods.
	\end{itemize}
	The existing works have achieved satisfactory results in coping with problems of less complexity and less requirement for precise control. It has been well-investigated that an ill-distributed value function would seriously stagnate performance. Exploration technologies, such as \emph{Ornstein-Uhlenbeck} noise and stochastic exploration, are used to alleviate this problem.

	Applying reinforcement learning (RL) theory, it is possible to handle control tasks with either a single missile \cite{gaudetReinforcementLearningAngleOnly2020, han2002state}, or discrete action space \cite{li2020adaptive,kong2020maneuver}. However, for control tasks with multiple agents(missiles), inefficient exploration and non-stationarity can lead to a deterioration of the accuracy of the value function, resulting in either saturated control or coordination loss.
	Value functions can be advantageous for discrete control, but can be flawed for continuous control tasks with large search spaces. Approaches that constrain the policy space have been discussed in \cite{thananjeyan2021recovery}, but they heavily rely on prior knowledge and do not scale well to different scenarios. As an alternative, evolutionary strategies have abandoned the use of value functions and have shown the outstanding capability for the aforementioned issues.

	\subsection{Natural evolutionary Strategy in multi-agent POMDP}
	
	In the evolutionary strategy, individual agent (or its policy) is expressed as a \emph{population}, the group of populations and the environment constitute the \emph{ecosystem}. The objective is to develop the optimal strategy for the group of populations to maximize the \emph{fitnesses} of the ecosystem. For cooperative tasks, the optimal strategy of the ecosystem will be exactly the optimal policy for each population.
	\begin{equation}\label{eqn:optimalpolicy}
		\argmax_{u^*_{tot}}F_{tot}(u^*_{tot}) = \begin{pmatrix}
			\argmax_{u_1}F_{1}(u_1) \\
			\;\;\vdots \\
			\argmax_{u_n}F_{i}(u_n)
		\end{pmatrix},
	\end{equation}
	where, $u_i$ which is defined in \prettyref{eqn:composedcommand} represents the policy of the $i$th population and $u_{tot}^*=[u_{i}^*]_{i=1}^{i=n}$ is the joint matrix of individual optimal policy. $F_i(\cdot)$ is its corresponding fitness function and $F_{tot}(\cdot)$ is the joint policy fitness function, more details can be viewed in \cite{sonQTRANLearningFactorize2019}. However, the inverse is not true:
	
	\begin{equation}\label{eqn:verseoptimalpolicy}
		\begin{pmatrix}
			\argmax_{u_1}F_{1}(u_1) \\
			\;\;\vdots  \\
			\argmax_{u_n}F_{i}(u_n)
		\end{pmatrix}\neq\argmax_{u^*_{tot}}F_{tot}(u^*_{tot}).
	\end{equation}
	This is because the optimal fitness obtained by one population may be based on the suboptimal fitness obtained by other populations. When the other populations evolve, the previous optima is easy to be broken. To overcome this nonstationary issue,  it is best for all populations to evolve simultaneously, that is co-evolution. Each generation updates its parameter at the same time, instead of updating sequentially, mapping in slight variance in fitness values.
	
	
	\subsection{Optimization in co-evolutionary parameter space}\label{sec:co-evoAnalysis}
	The gradient information is obtained by measuring the contribution of each sample. The parameters of the population are defined as $\theta$, and $\theta'$ represents that of the next generation. $p_{\psi}(\theta'|\theta)$ is the distribution function of $\theta'$ under $\theta$, where $\psi$  is the intrinsic parameter. Then the expectation fitness of the next generation is expressed as:
	\begin{equation}\label{eqn:estimationoffitness}
		\mathbb{E}_{\theta' \sim p_\psi(\theta'|\theta)}F(\theta')=\int_{\theta'} p_\psi(\theta'|\theta)F(\theta') \mathrm{d}\theta'.
	\end{equation}
	
	The derivative of Eq.~(\ref{eqn:estimationoffitness}) with respect to $\theta$ is 
	\begin{equation}\label{eqn:gradient1}
		\triangledown_{\theta}\mathbb{E}_{\theta' \sim p_\psi(\theta'|\theta)}F(\theta')=\mathbb{E}_{\theta'}\{ \triangledown_{\theta'}\text{log}\,p_\psi(\theta' | \theta)F(\theta')\}. 
	\end{equation}
	
	If we represent $\theta'$ as $\theta + \epsilon$, then we have the similar equation

	\begin{equation} \label{eqn:gradient2}
		\triangledown_{\theta}\mathbb{E}_{\epsilon \sim p_\psi(\epsilon)}F(\theta + \epsilon)=\mathbb{E}_{\epsilon}\{ \triangledown_{\epsilon}\text{log}\,p_\psi(\epsilon)F(\theta+\epsilon)\} .
	\end{equation}
	
	In an ecosystem with multiple populations, populations will interact and affect the evolutionary process. Thus, the fitness function of the  $i$th population is represented by $F_i( \varsigma _i)$, where $\varsigma _i=  \{\theta_i, \theta_j : j\in \mathcal{N}_i\}$ represents the parameter set of the $i$th population and its neighboring populations.
	The expected joint fitness of the next generation is expressed as:
	\begin{equation}\label{eqn:gradientj}
		\mathbb{E}\{F_i(\varsigma _i')\}=\int_{\varsigma_i'} p_\psi(\varsigma_i '|\varsigma_i )F_i(\varsigma_i ') \mathrm{d}\varsigma_i '.
	\end{equation}
	where, $p(\varsigma_i '|\varsigma_i )$ is the joint probability distribution of the next generation over $\varsigma_i$. Assume that $\theta_i'$ and $\theta_j'$ are sampled independently, we have $p(\varsigma_i '|\varsigma_i )= p(\theta_i')\prod_{j\in \mathcal{N}_i} p(\theta_j')$.

	The gradient of the joint fitness with respect to $\theta_i$ is expressed as
	
	\begin{equation}\label{specificgradient}
		\begin{aligned}
			\triangledown_{\theta_i} \mathbb{E} \{ F_i(\varsigma_i') \}
			&= \triangledown_{\theta_i}\int_{\varsigma_i'}p(\varsigma_i')F_i(\varsigma_i') {\rm d}\varsigma_i',\\
			&=  \int_{\theta_i'}\int_{\theta_j'}\cdot\cdot\cdot\triangledown_{\theta_i} p(\theta_i')F_i(\varsigma_i') \prod_{j\in \mathcal{N}_i} \\[\jot] 
			&\quad  p( \theta_j'){\rm d}\theta_i'\prod_{j\in \mathcal{N}_i}{\rm d}\theta_j'\\
			&= \int_{\theta_j'}\cdot\cdot\cdot\int_{\theta_i'}\left[\triangledown_{\theta_i}{logp(\theta_i')}F_i(\varsigma_i')\right] p(\theta_i')\prod_{j\in \mathcal{N}_i} \\[\jot]
			&\quad p( \theta_j'){\rm d}\theta_i'\prod_{j\in \mathcal{N}_i}{\rm d}\theta_j'\\
			&=
			\mathbb{E}_{(\varsigma_i')}\{\triangledown_{\theta_i} {logp(\theta_i')}F_i(\varsigma_i')\}.
		\end{aligned}
	\end{equation}
	
	Note that it has the same format as the version of a single population, it seems to be fine if we just keep the original equation. The influence of $\theta_c'$ is counteracted through the calculation of its expectation. However, it is known that the expectation of the joint distribution is approximated through sampling with a limited size. Although individuals are sampled without bias(unbiased estimation), there exists an intrinsic bias for inadequate sampling, and the bias will grow linearly with an increment of distribution dimensionality. So it can be a serious issue when taking the expectation of all neighboring parameters, and the sample size stays relatively small.
	
	It is not necessary to take account of all parameters since only the expectation of $\theta_i$ is actually needed. To alleviate the incremental bias, we propose to approximate only the expectation of the parameter of the current population $\theta_i$ and ignore its neighbor parameters, which is
	\begin{equation}\label{eqn:expofoneself}
		\mathbb{E}_{\theta_i'}F_i(\theta_i') = \int_{\theta_i'}F_i(\theta_i')p(\theta_i')\mathrm{d}\theta_i'.
	\end{equation} 
	Though $p(\theta_i')$ is available for independent distribution, it is infeasible to obtain $F_i(\theta_i')$, since all agents are sampled and evaluated together. However, the expectation of individual fitness can be approximated by the multiplication between the original fitness $F_i(\varsigma_i')$ and
	its confidence. The rectified expectation is expressed as
	\begin{equation}\label{eqn:recifiedexpofoneself}
		\mathbb{E}_{\theta_i'}F_i(\theta_i') = \int_{\theta_i'}F_i(\varsigma_i')p(\theta_i')\prod_{c\in \mathcal{N}_i} p( \theta_c')\mathrm{d}\theta_i',
	\end{equation}
	where, $\prod_{c\in \mathcal{N}_i}p( \theta_c')$ is the confidence, and $\theta_c'$  represents the samples that appear along with $\theta_i'$. In this way, the bias of estimating the expectation of the neighboring distributions is addressed. The gradient after modification is
	
	\begin{equation}\label{eqn:gradient5}
		\begin{split}
			\triangledown_{\theta_i} \mathbb{E}_{\theta_i' } F_i(\theta_i') = \int_{\theta_i'}\left[\triangledown_{\theta_i}{logp(\theta_i')}F_i(\varsigma_i')\right] p(\theta_i')\prod_{c\in \mathcal{N}_i} p(\theta_c'){\rm d}\theta_i'\\
			= \mathbb{E}_{\theta'_i}\left\{\triangledown_{\theta_i} {logp(\theta_i')}F_i(\varsigma_i')\prod_{c\in \mathcal{N}_i}p(\theta_c')\right\}.
		\end{split}
	\end{equation}
	
	\begin{remark}
		We refer to strategies that use \prettyref{eqn:gradient5} as the updating policy as the natural co-evolutionary strategy (NCES). The natural co-evolutionary strategy is a strategy for evolutionary algorithms that updates the weights of the population in multi-agent systems. Compared to other existing evolutionary strategy algorithms, NCES alleviates the incremental bias caused by neighboring parameters and achieves better performance in cooperative continuous control tasks.
	\end{remark}
	
	The core idea is that although the individual fitness $F_i(\theta_i')$ does not exist, the expectation of the individual fitness does, and is invariant to the parameter distributions of its neighboring agents, so the expectation of the individual agent's fitness should be calculated instead of including the expectation of neighboring agents. Let's denote the expectation of the objective function over $(\varsigma_i')$, which is $\triangledown_{\theta_i} {logp(\theta_i')}F_i(\varsigma_i')$, by $\phi(\varsigma_i')$ and the expectation of the objective function over $\theta_i'$ by $\phi(\theta_i')$, such that 
	\begin{equation}\label{eqn:estoversingleparameter}
		\phi(\theta_i')=\int_{\theta_c'}\phi(\varsigma_i')p(\theta_c')\text{d}\theta_c'.
	\end{equation} 
	
	To visualize the sampling estimation process, we use a variant of \emph{eggholder} as the objective function for demonstration, which is defined in \prettyref{eqn:eggholder}, since the real objective is too expensive to obtain. 
	
	\begin{table*}
		\caption{A variant of the eggholder function}
		\centering
		\begin{minipage}{0.75\textwidth}
			\begin{align}\label{eqn:eggholder}
				\phi(\varsigma_i') 
				&= \frac{(30\theta_{c}'+47)\sin\sqrt{|30\theta_{c}'+15\theta_{i}'+47|}-30\theta_{i}'\sin\sqrt{|30\theta_{i}'-(30\theta_{c}'+47)|}}{200} - 0.2
			\end{align}	
			\medskip
			\hrule
		\end{minipage}
	\end{table*}
	
	Assume there exists one neighboring population $\theta_c$ for $\theta_i$ with the size of 400, the sampled individuals are shown in \prettyref{fig:estgrad3d}, following a bivariate normal distribution, and the parameter spaces are confined to $\theta_{i}'\in[-2,2], \theta_{c}'\in[-2,2]$. 
	Since $\theta_i'$ and $\theta_c'$ are sampled independently, the individuals can be considered to be sampled from $p(\theta_i')$ only, which is represented by the sample points in \prettyref{fig:estgrad2d}. In this objective graph with single-dimensional parameter space, the real objective curve expressed in solid line is obtained by \prettyref{eqn:estoversingleparameter}. In order to standardize the scope, all the sampled data including the real objective values are uniformly scaled to the range [0,1], and such standardization does not affect the directionality of the estimated gradient.
	
	\begin{figure}[htbp]
		\centering
		\adjincludegraphics[height=0.5\linewidth]{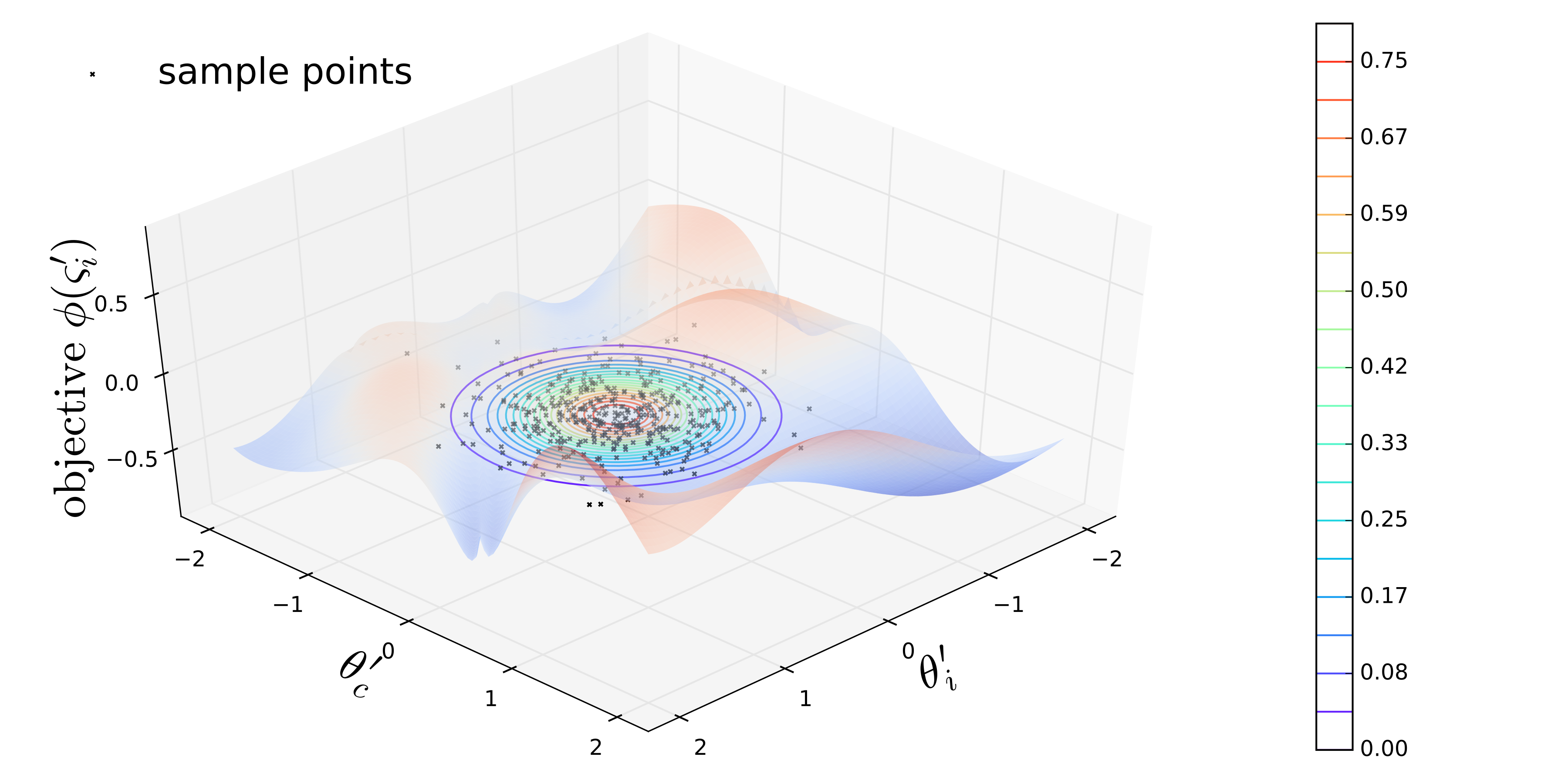}
		\caption{Estimate gradient through sampling}\label{fig:estgrad3d}
	\end{figure}
	\begin{figure}[htbp]
		\centering
		\adjincludegraphics[height=0.4\linewidth]{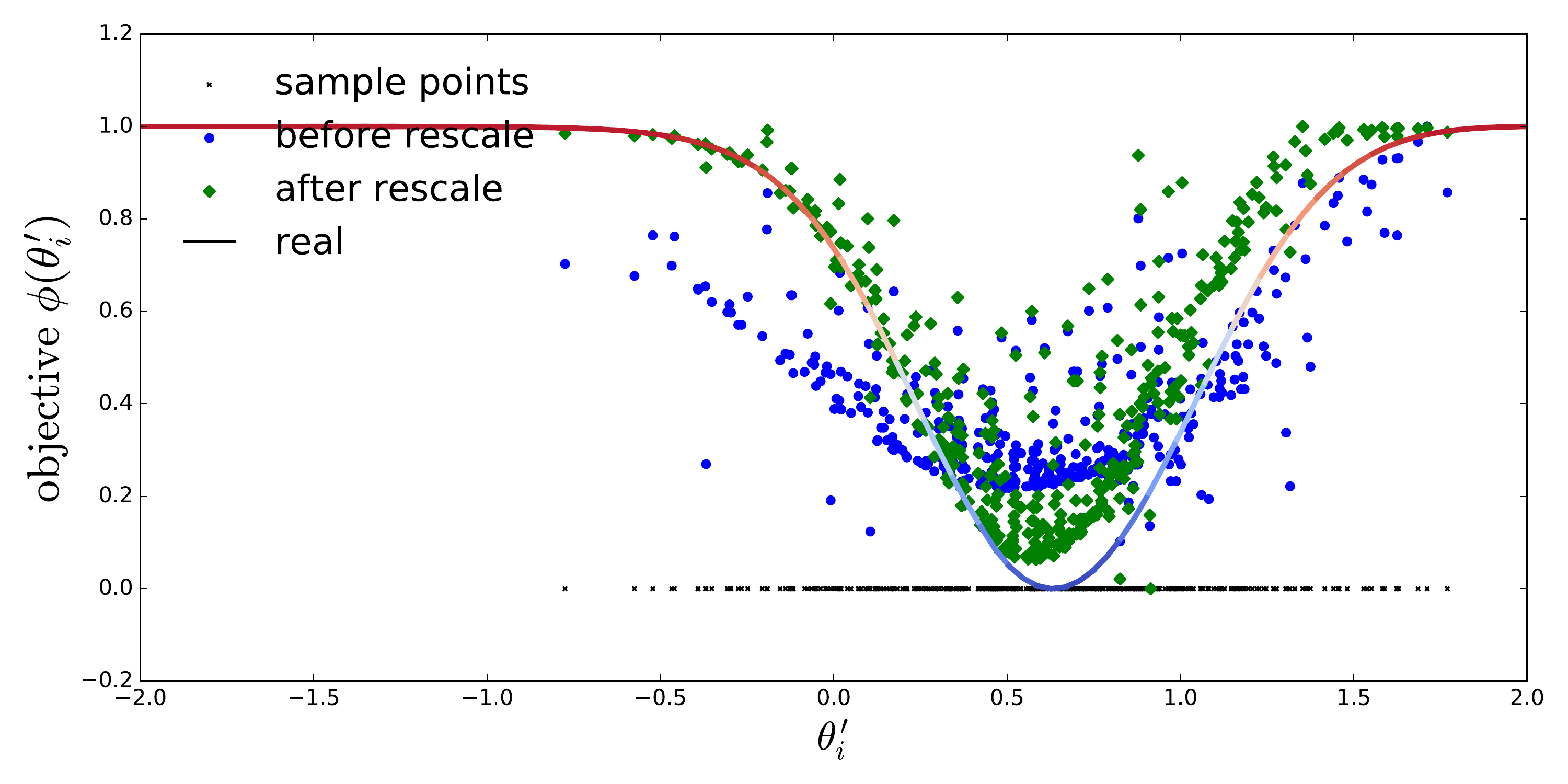}
		\caption{Estimate gradient through sampling from single distribution}\label{fig:estgrad2d}
	\end{figure}

	The original objective value $\phi(\theta_i')$ for each sample varies as the corresponding $p(\theta_c')$ changes, which introduces additional estimation bias. As shown by the blue dots in \prettyref{fig:estgrad2d}, the distribution of the objective values before rescale is significantly different from the distribution of the true objective values. From \prettyref{fig:estgrad3d}, it can be seen that as sample points deviate from the distribution center, their probability of being sampled also decreases, which means that the accuracy or confidence of the fitness of each sample $\phi(\varsigma_i')$ decreases with the decrease of $p(\theta_c')$. If the original objective is rescaled by its confidence $p(\theta_c')$, which is the probability of the appearance of the $\varsigma_i'$ given the existence of $\theta_{i}'$, the reconstructed objective values represented by the green square dot in \prettyref{fig:estgrad2d} is closer to the real $\phi(\theta_i')$, which obviously reduce the estimation bias.
	
	The above proof indicates that in the case of limited population size and a large number of neighboring populations, applying the rescaled gradient will keep the approximation bias to the level of a single population, resulting in a more accurate estimation of gradient information, empirical results also supported this conclusion. However, when the population size is large enough (e.g., thousands), this approach may not result in additional accuracy improvements.
	
	The modified expression is also desirable for parallel computing, as only the perturbation of the neighboring populations is needed, which can be easily obtained through communication among processes, and the probabilities can be calculated in a distributed approach.
	
	\subsection{Elitist adaptation Techniques}
	
	The performance of NES is sensitive to hyper-parameters, and the learning rate is usually the most critical hyper-parameter of NES. 
	Thus, an elitist adaptation method for the learning rate is applied in this paper. 
	First, a list of learning rates is linearly selected in the neighborhood of the original learning rate  $\eta_\alpha$ as:
	\begin{equation}\label{eqn:lrcad}
		\begin{split}
			\eta_{cad} &= \{clip((1+0.1k)\eta_\alpha, \eta_{\alpha min}, \eta_{\alpha max}):k \in \mathbb{Z}, \\
			&\quad -l/2\le k \le l/2\}, 
		\end{split}
	\end{equation}
	where, $\eta_{cad}\in \mathbb{R}^{m+1}$. The $\eta_{\alpha min}$ and $\eta_{\alpha max}$ are the minimum and maximum value of  $\eta_\alpha$. $l$ is the size of perturbations which is clipped by $clip(\cdot)$. 
	To evaluate the quality of the candidate learning rates, the evaluation function $G_i(\cdot)$ is defined:
	\begin{equation}\label{eqn:lrquality}
		G_i(\eta_{cad}) = 
		\begin{pmatrix}
			F_i(\theta_i + \eta_{cad}^{-l/2} g_{\theta_{i}}) - F_i(\theta_i + \eta_{\alpha}g_{\theta_i})\\
			\;\;\vdots  \\
			F_i(\theta_i + \eta_{cad}^{l/2} g_{\theta_{i}}) - F_i(\theta_i + \eta_{\alpha}g_{\theta_i})
		\end{pmatrix},
	\end{equation}
	where $\eta_{cad}^k$ is the $k$th sampled learning rate of the candidate list. The gradient $g_{\theta_i}$ is kept after evaluation. Therefore, by comparing the candidate learning rates with the original one, the next update can be better than the previous one.
	Considering \emph{peer pressure}, each missile is assigned the same learning rate.
	The learning rate of the next generation is obtained by
	\begin{equation}\label{eqn:lradaptation}
		\eta_\alpha' = \argmax_{\eta_{cad}^k}(\sum_{i=1} ^{n}G_i(\eta_{cad})).
	\end{equation}

	A similar approach is employed to obtain the optimal $\Xi_{d1}^*$ during the training process.
	\begin{equation} \label{eqn:optimaltheta}
		\Xi_{d1}^* = \argmax_{\Xi_{d1}^k}\begin{pmatrix}
			H(\Xi_{d1}^1)\\
			\;\;\vdots \\
			H(\Xi_{d1}^h)\\
		\end{pmatrix}
	\end{equation}
	where, $\Xi_{d1}^k$ is uniformly sampled from the region $[-\pi, \pi]$.
	$H(\cdot)$ is the fitness function of sampled LOS angle that is defined as
	\begin{equation}\label{eqn:lrquality}
		H(\Xi_{d1}^k) =  F_{tot}(\theta_{init})|_{\Xi_{d1}=\Xi_{d1}^k},
	\end{equation}
	where $\theta_{init}=[\theta_{i}^{init}]$ is the joint initial individual parameters. In this way, the desired impact angles are established automatically.
	
	A rank-based fitness shaping method that is in the same spirit as the one proposed in \cite{wierstra2014natural} is employed in shaping the raw fitness. Conventionally, we still let $F_i(\cdot)$ denote the fitness function after shaping. Another technique called mirrored sampling \cite{brockhoff2010mirrored} is also applied for sampling parameter perturbations.

	\section{Hybrid co-evolutionary cooperative guidance algorithm }
	
	To achieve coordinated attack, the natural co-evolutionary strategy is applied to optimize the parameter matrices $\theta_{i}=[W_{3i} , W_{2i}, W_{1i}]$ of the neural network controller.

	The univariate Gaussian distribution with zero means and standard deviation $\sigma$  is used to sample perturbations. According to \prettyref{eqn:gradient5}, it can be obtained that:
	\begin{dmath} \label{eqn:thisgradient}
		g_{\theta_i} = \mathbb{E}_{\epsilon_i\sim N(0, \sigma^2)}\left\{\triangledown_{\theta_i }{logp(\theta_i')}F_i(\varsigma_i')\prod_{c\in \mathcal{N}_i}p(\theta_c')\right\}\\
		= \frac{1}{m\sigma^2}\sum_{i=1}^{m}F_i(\varsigma_i')\epsilon_i \prod_{c\in \mathcal{N}_i}p(\epsilon_c).
	\end{dmath}

	The complete implementation algorithm of the proposed guidance law is shown in Algorithm \prettyref{alg:HCCGL}.
	The conceptual diagram in \prettyref{fig:HCCGL} figuratively revealed the parallel simulation process. A master-slave (or fully-distributed) model \cite{gong2015distributed}\cite{mendiburu2005parallel} is used for large-scale parallel computation. In this case, each population is evaluated in a separate process and the results of the ecosystem are aggregated to calculate the rescaled gradient \prettyref{eqn:thisgradient} and sent to produce guided generations. The sampled generations are then distributed to each parallel process, and the gradient is recalculated and updated.

	\begin{algorithm}
		\caption{Hybrid Cooperative Co-Evolutionary Guidance Law (HCCGL)}\label{alg:HCCGL}
		
		\begin{algorithmic}
			\Require $\eta_\alpha$, $\eta, \sigma, \theta_{init} = [\theta_{i}^{init}], $agent number n.
			
			Sample $[\Xi_{d1}^k]\in \mathbb{R}^{h} \sim U(-\pi, \pi)$, obtain $\Xi_{d1}^*$ using \prettyref{eqn:optimaltheta}
			\Repeat	
			\For{ k= 1... m}
			\State  Sample group of individuals:
			\State  $\epsilon^k = \{\epsilon^k_i \sim N(0, \sigma^2I): i\in\{1,...,n\}\},$
			\State $\varsigma^k=\{\varsigma_i^k=  \{\theta_i^k, \theta_j^k : j\in \mathcal{N}_i\}: i\in\{1,...,n\}\}$ 
			\State  evaluate fitness $F_i(\varsigma_i^k)$, for $i\in\{1,...,n\}$
			\EndFor
			
			\For{each agent i = 1... n}
			\State calculate natural gradient:
			\State $g_{\theta_i}\leftarrow \frac{1}{m\sigma^2}\sum_{k=1}^{m}F_i(\varsigma_i^k)\epsilon_i^k \prod_{c\in \mathcal{N}_i}p(\epsilon_c^k)$
			\State  $\theta_i\leftarrow \theta_i + \eta_{\alpha}\cdot g_{\theta_i}$
			\EndFor
			\If{time for adaptation}
			\State sample $\eta_{cadi}$ using \prettyref{eqn:lrcad}
			\State  $\eta_\alpha \leftarrow \argmax_{\eta_{cad}^k}(\sum_{i=1} ^{n}G_i(\eta_{cad}))$
			\EndIf
			\Until{stopping criterion is met}
		\end{algorithmic}
	\end{algorithm}

	\begin{figure}[htbp]
		\centering
		\adjincludegraphics[height=\linewidth, trim={0, 0.2\width, 0, 0.0\width}, clip]{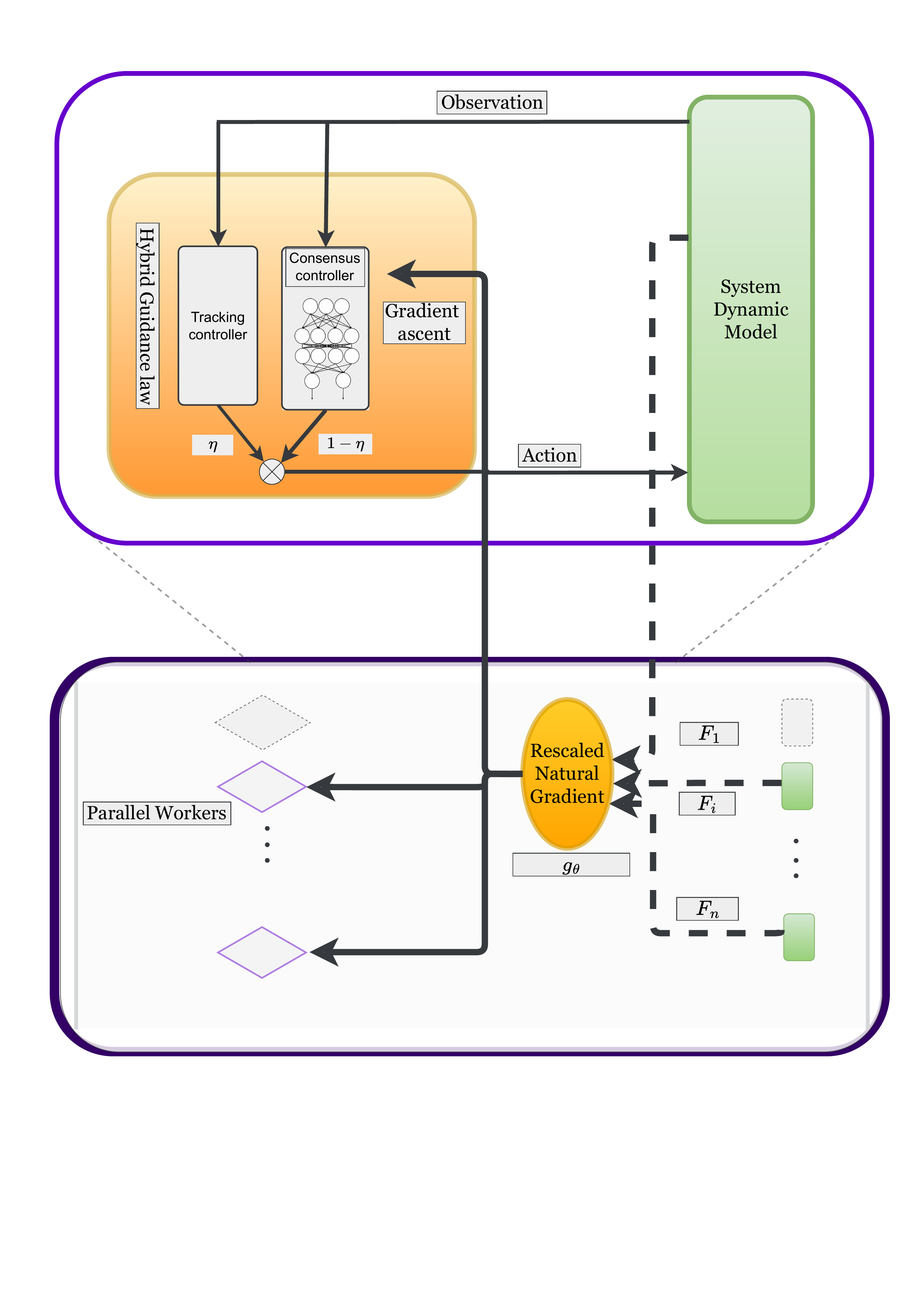}
		\caption{Conceptual framework of our proposed HCCGL, the upper box connected by dotted lines is a detailed expansion of the evaluation and evolutionary processes in the lower box.}\label{fig:HCCGL}
	\end{figure}
	
	\begin{theorem}
		Under the control policy \prettyref{eqn:composedcommand} and the update strategy shown in Algorithm \prettyref{alg:HCCGL}, by selecting an appropriate learning rate, sampling variance, and population size, the obtained control policy will converge to a small neighborhood of the optimal control policy when $l\to\infin$.
	\end{theorem}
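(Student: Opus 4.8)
The plan is to read the NCES update in Algorithm~\ref{alg:HCCGL} as a biased, noisy stochastic gradient ascent on a Gaussian-smoothed surrogate of the joint fitness, and then combine a standard stochastic-approximation argument with the elitist learning-rate rule, whose role is to supply an asymptotically exact line search as $l\to\infty$. First I would fix the surrogate $J_\sigma(\theta):=\mathbb{E}_{\epsilon\sim N(0,\sigma^2 I)}[F_{tot}(\theta+\epsilon)]$ together with its per-agent analogue built from the rectified expectation \prettyref{eqn:recifiedexpofoneself}. Since $\psi=\tanh$ is bounded by $a_{lmax}$ and $\phi$ is a sigmoid, the closed-loop trajectories produced by \prettyref{eqn:composedcommand} stay bounded over the finite terminal-guidance horizon, so each $F_i$ is bounded; Gaussian smoothing of a bounded function is then continuously differentiable with an $L(\sigma)$-Lipschitz gradient, which is the regularity we obtain ``for free'' and which sidesteps the non-smoothness of the raw fitness. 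Using the Gaussian score identity $\nabla_{\theta_i}\log p(\theta_i'|\theta_i)=\epsilon_i/\sigma^2$, the estimator $g_{\theta_i}$ in \prettyref{eqn:thisgradient} is, in expectation, exactly $\nabla_{\theta_i}$ of the rectified objective appearing in \prettyref{eqn:gradient5}, so the only discrepancies relative to $\nabla_{\theta_i}J_\sigma$ are (i) the modelling bias of replacing $F_{tot}$ by the product-rescaled $F_i$, and (ii) finite-sample noise.

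Next I would quantify (i) and (ii). For the bias, I would invoke the analysis of \prettyref{sec:co-evoAnalysis}: the confidence weights $\prod_{c\in\mathcal{N}_i}p(\epsilon_c)$ are bounded above by $(2\pi\sigma^2)^{-|\mathcal{N}_i|/2}$ and, by construction, hold the estimation error at the level of a single population; combined with the cooperative structure \prettyref{eqn:optimalpolicy}, which aligns $\nabla_{\theta_i}F_i$ with $\nabla_{\theta_i}F_{tot}$ up to a residual that vanishes near a joint optimum, this yields $\|\mathbb{E}g_{\theta_i}-\nabla_{\theta_i}J_\sigma\|\le b(\sigma,m)$ with $b$ small for small $\sigma$ and moderate $m$. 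For the noise, boundedness of $F_i$ and of the confidence weights, together with mirrored sampling and rank-based fitness shaping, gives $\mathbb{E}\|g_{\theta_i}-\mathbb{E}g_{\theta_i}\|^2\le C(\sigma)/m$, where $C(\sigma)$ grows polynomially in $\sigma^{-1}$, so the population size $m$ directly controls the noise floor while $\sigma$ must not be taken too small. Then the learning-rate adaptation enters: the candidate list \prettyref{eqn:lrcad} has spacing $0.1\eta_\alpha$ and, after clipping, fills $[\eta_{\alpha min},\eta_{\alpha max}]$ densely as $l\to\infty$, so the $\argmax$ over the aggregate one-step gain $\sum_i G_i$ in \prettyref{eqn:lradaptation} realizes an asymptotically exact line search and the accepted step is at least as good as the conservative fixed step $\eta\le 1/L(\sigma)$. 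Substituting into the $L(\sigma)$-smoothness descent lemma gives, in expectation, $J_\sigma(\theta^{(k+1)})\ge J_\sigma(\theta^{(k)})+c\,\eta\,\|\nabla J_\sigma(\theta^{(k)})\|^2-\eta\,(b^2+C(\sigma)/m)$; since $J_\sigma$ is bounded above, telescoping forces $\liminf_k\|\nabla J_\sigma(\theta^{(k)})\|^2=O(b^2+C(\sigma)/m)$.

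Finally I would pass back from the surrogate: $J_\sigma\to F_{tot}$ uniformly and $\nabla J_\sigma\to\nabla F_{tot}$ as $\sigma\to0$ (again using the bounded activations), so the iterates enter a neighborhood of a stationary point of $F_{tot}$ of radius $O(\sigma+b(\sigma,m)+\sqrt{C(\sigma)/m})$; balancing $\sigma$ (small enough for bias, large enough for variance) against a large $m$ and a small $\eta_\alpha$ — the ``appropriate learning rate, sampling variance, and population size'' of the hypothesis — shrinks this radius below any prescribed $\delta$, which by the cooperative equivalence \prettyref{eqn:optimalpolicy} is a small neighborhood of the optimal cooperative control policy. I expect the hard part to be the simultaneous-update coupling: rigorously showing that each agent ascending its own rescaled $F_i$ collectively ascends $F_{tot}$ requires the cooperative-alignment assumption implicit in \prettyref{eqn:optimalpolicy}--\prettyref{eqn:verseoptimalpolicy} together with an explicit bound on the residual cross-terms between neighboring populations, and one must also concede that for a non-convex MLP policy class ``optimal'' here should be read as a stationary (locally optimal) cooperative policy rather than a global one.
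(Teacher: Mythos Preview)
Your outline is a coherent stochastic-approximation argument, but it follows a genuinely different route from the paper. The paper never introduces a smoothed surrogate or a descent lemma; instead it works directly with the policy approximation error $E_{ui}^{[l]}=u_i^{[l]}-u_i^{*}$, collapses the three-layer MLP to a single parameter matrix $W_i^{[l]}$ via the universal-approximation theorem, and writes $E_{ui}^{[l]}-E_{ui}^{[l-1]}=\eta\eta_\alpha g_{\theta_i}^{[l-1]}$. It then linearizes the fitness by a first-order Taylor expansion in $|E_{ui}|$ under the global structural assumption $\partial F_i/\partial|E_{ui}|<0$ (with a diagonal negative-definite Jacobian $G_i^{[l-1]}$), and obtains $\Delta|E_{ui}^{[l]}|<0$ from a sign argument: the Hadamard-square term $A_i^k=\Delta E_i^k*\Delta E_i^k$ is positive, the confidence weights $P_i^k$ are positive, and the residual cross term $B_i^k$ vanishes as $m\to\infty$ by symmetry of the unbiased sampling. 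Your approach buys honesty about non-convexity (you correctly note that only stationarity, not global optimality, is attainable for an MLP policy class) and avoids the paper's strong monotonicity and single-layer-reduction assumptions by getting Lipschitz smoothness ``for free'' from Gaussian convolution; the paper's approach buys a more elementary, Lyapunov-style monotone-decrease statement and sidesteps your bias/variance bookkeeping, at the cost of those structural hypotheses. One small slip in your write-up: the $l$ in the theorem is the iteration index, not the size of the learning-rate candidate list (the paper overloads the symbol), so the elitist rule is a fixed finite-grid search rather than an asymptotically dense one; your descent argument still goes through because, as you note, the accepted step dominates any fixed conservative step in the grid.
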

	
	\begin{proof}
		The approximation error of the control policy at the \emph{l}th iteration is defined by
		\begin{equation}\label{eqn:policyapproerr}
			E_{ui}^{[l]}=u_i^{[l]} - u_i^{*},
		\end{equation}
		where $u_i^{*}$ is the optimal control policy of the \emph{i}th agent, and we have
		\begin{equation}\label{eqn:combinedpolicy}
			u_i^{[l]} = (1-\eta)u_{pi} + \eta u_{ei}^{[l]}.
		\end{equation}
		The control policy of the neural network is represented by its parameter set. Since a neural network with a single hidden layer can approximate a multivariate continuous function with arbitrary precision \cite{trennMultilayerPerceptronsApproximation2008}, which implies that a single hidden layer perceptron with sufficient units is equivalent to the neural network with three hidden layers used in this work. In this way, the neural network controller can be represented by
		\begin{equation}\label{eqn:neuralcontrollerpolicy}
			u_{ei}^{[l]} = W_i^{[l]T},
		\end{equation}
		which is a column matrix and the activation function parameters are regarded as constants.
		By combining \prettyref{eqn:neuralcontrollerpolicy} and \prettyref{eqn:combinedpolicy} and substituting it into \prettyref{eqn:policyapproerr}, we have
		\begin{equation}\label{eqn:errorAtl}
			E_{ui}^{[l]}=(1-\eta)u_{pi} + \eta W_i^{[l]T} - u_i^{*},
		\end{equation}
		\begin{equation}\label{eqn:errorAtl-1}
			E_{ui}^{[l-1]}=(1-\eta)u_{pi} + \eta W_i^{[l-1]T} - u_i^{*}.
		\end{equation}
		Further combining \prettyref{eqn:errorAtl} and \prettyref{eqn:errorAtl-1}, the term of fixed controllers are eliminated, and we have
		\begin{equation}
			\begin{aligned}
				E_{ui}^{[l]} - E_{ui}^{[l-1]} 
				&= 
				\eta(W_i^{[l]T}-W_i^{[l-1]T}) \\
				&=
				\eta\eta_\alpha g_{\theta_{i}}^{[l-1]}, \\
			\end{aligned}
		\end{equation}
		where $\eta_\alpha$ is the learning rate and $\eta$ the guidance gain, with $\eta_\alpha, \eta > 0$. The focus of this equation, $g_{\theta_{i}}^{[l-1]}$, is the policy update gradient at the \emph{l-1}th iteration, which follows by \prettyref{eqn:thisgradient}.
		Expanding this equation, we have
		\begin{equation}\label{eqn:errorDiff1}
			\begin{aligned}
				E_{ui}^{[l]} - E_{ui}^{[l-1]} 
				&= 
				\eta\eta_\alpha g_{\theta_{i}}^{[l-1]} \\
				&=
				\eta\eta_\alpha \frac{1}{m\sigma^2}\sum_{k=1}^{m}F_i(E_{ui}^{[k]})\epsilon_i^k \prod_{c\in \mathcal{N}_i}p(\epsilon_c^k) \\
				&=
				\eta\eta_\alpha \frac{1}{m\sigma^2}\sum_{k=1}^{m}F_i(E_{ui}^{[k]})*(E_{ui}^{[k]}-E_{ui}^{[l-1]}) \\[\jot]
				&\quad \prod_{c\in \mathcal{N}_i}p(\epsilon_c^k).
			\end{aligned}
		\end{equation}
		Note that in \prettyref{eqn:errorDiff1}, $F_i(E_{ui}^{[k]}):\mathbb{R}^p\to\mathbb{R}^p$ is the transformed fitness function for the evaluation of the policy error, with p as the number of parameters. Thus, it is different from the fitness function discussed in the previous sections, which evaluates the policy directly. It is assumed that $F_i(E_{ui}^{[k]})$ is fully differentiable to the policy controller, and
		\begin{equation}
			\frac{\partial F_i(|E_{ui}^{[k]}|)}{\partial |E_{ui}^{[k]}|} < 0,
		\end{equation} 
		considering that $|E_{ui}^{[k]}|$ represents the quality of the policy globally.
		
		In an effort to linearize the fitness evaluation function, Taylor's formula is utilized to expand the equation at $|E_{ui}^{[l-1]}|$ and the higher order terms are ignored, and we obtain 
		\begin{equation}\label{eqn:fitnessAppro}
			F_i(|E_{ui}^{[k]}|) = G_i^{[l-1]}[|E_{ui}^{[k]}| - |E_{ui}^{[l-1]}|] + F_i(|E_{ui}^{[l-1]}|),
		\end{equation}
		where $G_i^{[l-1]} \in \mathbb{R}^{p\cross p}$ is a diagonal Jacobian matrix defined by
		\begin{equation}
			G_i^{[l-1]} = \eval{\pdv{F_i(|E_{ui}^{[k]}|)}{|E_{ui}^{[k]}|}}_{|E_{ui}^{[l-1]}|},
		\end{equation}
		with negative entries and p as the number of total parameters.
		Since $|E_{ui}^{[k]}|$ is located within a tiny vicinity of $|E_{ui}^{[l-1]}|$, \prettyref{eqn:fitnessAppro} is of considerable accuracy.
		
		Then, by taking the absolute value of the approximation error and substituting \prettyref{eqn:fitnessAppro} into \prettyref{eqn:errorDiff1}, and considering $\Delta E_i^k=|E_{ui}^{[k]}| - |E_{ui}^{[l-1]}|$ we obtain
		\begin{equation}
			\begin{aligned}
				\Delta|E_{ui}^{[l]}|
				&=
				|E_{ui}^{[l]}| - |E_{ui}^{[l-1]}| \\
				&= 
				\eta\eta_\alpha \frac{1}{m\sigma^2}\sum_{k=1}^{m}[G_i^{[l-1]}\Delta E_i^k +  F_i(|E_{ui}^{[l-1]}|)]* \\
				&\quad \Delta E_i^k* \prod_{c\in \mathcal{N}_i}p(\epsilon_c^k) \\
				&= 
				\eta\eta_\alpha \frac{1}{m\sigma^2}G_i^{[l-1]} \sum_{k=1}^{m}[\Delta E_i^k* \Delta E_i^k + \\
				&\quad F_i(|E_{ui}^{[l-1]}|)*[|E_{ui}^{[k]}| - |E_{ui}^{[l-1]}|]] *\prod_{c\in \mathcal{N}_i}p(\epsilon_c^k) \\
				&=
				\eta\eta_\alpha \frac{1}{m\sigma^2}[G_i^{[l-1]} \sum_{k=1}^{m}\Delta E_i^k* \Delta E_i^k * \prod_{c\in \mathcal{N}_i}p(\epsilon_c^k) \\
				&\quad + F_i(|E_{ui}^{[l-1]}|)*\sum_{k=1}^{m}\Delta E_i^k * \prod_{c\in \mathcal{N}_i}p(\epsilon_c^k)].  \\
			\end{aligned}
		\end{equation}
		For brevity, we define $A_i^{k}, P_i^k$, and $B_i^k$ by
		\begin{equation}
			\begin{aligned}
				A_i^{k} &= \Delta E_i^k*\Delta E_i^k; \\
				P_i^k &= \prod_{c\in \mathcal{N}_i}p(\epsilon_c^k);\\
				B_i^k &=  \sum_{k=1}^{m}|E_{ui}^{[k]}|*P_i^k - \sum_{k=1}^{m}|E_{ui}^{[l-1]}|*P_i^k,
			\end{aligned}
		\end{equation}
		such that 
		\begin{equation}
			\begin{aligned}
				\Delta|E_{ui}^{[l]}|
				&= 	
				\eta\eta_\alpha \frac{1}{m\sigma^2}[G_i^{[l-1]}\sum_{k=1}^{m}A_i^k * P_i^k + \\
				&\quad F_i(|E_{ui}^{[l-1]}|)*B_i^k].
			\end{aligned}
		\end{equation}
		
		Since $|E_{ui}^{[k]}|$ is sampled from an unbiased normal distribution which is centered at $|E_{ui}^{[l-1]}|$, as shown in the analysis of \prettyref{sec:co-evoAnalysis}, we have
		\begin{equation}
			B_i^k \to  0, \quad\text{as}\,\, m\to\infin .
		\end{equation}
		Also, from the matrix Hadamard product we have
		\begin{equation}
			A_i^k > 0,
		\end{equation}
		and 
		\begin{equation}
			P_i^k > 0.
		\end{equation}
		$G_i^{[l-1]}$ is negative definite. Given sufficient large $m$, it is evident that
		\begin{equation}
			\Delta|E_{ui}^{[l]}| < 0, \quad l=1,2,...\,.
		\end{equation}
		Therefore, by adjusting the learning rate $\eta_\alpha$ attentively, the approximation error can be decreased to a considerably small range $\delta_e$, such that
		\begin{equation}
			\begin{aligned}
				\lim_{l\to\infin}|E_{ui}^{[l]}| = \delta_e,\\
				\delta_e \to 0, \quad \text{as}\ m \to \infin.
			\end{aligned}
		\end{equation}
		Thus, the control policy $u_i$ converges to a small neighborhood of the optimal control policy $u_i^*$, resulting in a stabilizing control system.

	\end{proof}

	\section{Simulations and analysis}
	\label{sec:simulations}
	
	To verify the validity of the proposed method, a variety of simulations based on the cooperative guidance framework are designed. Both cases with stationary target and maneuvering target are simulated. Further, comparison experiments are performed to fully demonstrate the superiority of the proposed guidance method.

	\subsection{Paremeter setup}
	The acceleration constraint and velocity constraint of the missiles are listed in \prettyref{tab:ExpSetup}.
	The hyper-parameters of the algorithm are listed in \prettyref{tab:HyperParameters}.

	Now that frameskip has been extensively employed in continuous control problems\cite{salimans2017evolution}. In this work, this parameter of frameskip is set to 12 for case 1 and case 2, and 40 for case 3. 
	Appropriate adjustment of this parameter will facilitate the training process without affecting the final results.

	\begin{table}[]
		\centering
		\caption{Constraints  of the missiles}
		\label{tab:ExpSetup}
		\begin{tabular}{@{}ll@{}}
			\hline\hline
			Parameter                                          & Value \\ \midrule
			maximum lateral overload (g) , $a_{lmax}$ & 50    \\
			maximum trust overload (g), $a_{vmax}$    & 5     \\
			Upper bound of velocity (m/s),     $V_{max}$           & 900   \\
			Lower bound of velocity (m/s), $V_{min}$   & 350   \\ \hline\hline
		\end{tabular}
	\end{table}
	
	\begin{table}[]
		\centering
		\caption{Hyper-parameters of the cooperative guidance algorithm}
		\label{tab:HyperParameters}
		\begin{tabular}{@{}ll@{}}
			\hline\hline
			Parameter                                            & Value \\ \midrule
			simulation step (ms), $\tau$                         & 5     \\
			guidance gain, $\eta$                                & 0.3   \\
			Initial learning rate, $\eta_\alpha$                 & 0.015 \\
			standard deviation for sampling population, $\sigma$ & 0.2   \\
			size of learning rate adaptation, $l$                & 20    \\
			size of population, m                                & 140   \\
			adaptation cycle, $\rho$                             & 50    \\
			navigation constant, $\beta$                             & 4     \\
			$k_a$                                                & 1     \\
			$k_t$                                                & 0.2   \\
			$\xi_a$                                              & 10    \\
			$\xi_t$                                              & 1     \\
			$\lambda_a$                                          & 4000    \\
			$\lambda_t$                                          & 2000     \\
			$\beta_a$                                            & 10    \\
			$\beta_t$                                            & 2     \\ \hline\hline
		\end{tabular}
	\end{table}

	\subsection{Case 1: Comparison Experiments}
	
	In this section, the proposed guidance law is compared with the time and space cooperative guidance law (TASCGL) proposed in \cite{lyu2019multiple}, which considers the space and time cooperative guidance under the distributed communication topology. However, different from the method proposed in this work, the compared method is susceptible and brittle to the initial conditions. Therefore, in order to verify the generalization ability of the control methods, a uniform initial condition was adopted in the comparison simulation, which differs slightly from the initial condition in the comparison method. The initial conditions as shown in \prettyref{tab:InitialConditionofCase1}.  
	Four missiles are engaged in the cooperative scenario with different desired relative impact angles $\delta_{d}^i$ as $20\degree, 60\degree$, and $30 \degree$, for each $i=1,2,3$, respectively. The target is located at (9500, 9000)m.
	
	Although the reference method is primarily designed for directed topology, it can be well extended to an undirected topology condition, thus in order to conduct effective comparison experiments, we additionally implemented a comparison experiment under an undirected topology the same as the one used in the proposed method. We use TASCGL$^a$ and TASCGL$^b$ to denote the comparative experiments performed under directed and undirected communication topologies, respectively.
	\begin{table}[th]
		\centering
		\caption{Initial conditions of case 1.}
		\label{tab:InitialConditionofCase1}
		\begin{tabular}{@{}llll@{}}
			\hline\hline
			Missile                                   & Position (m) & Flight-path  & Velocity\\ 
			&                       &           Angle ($\degree$)   &     (m/s)      \\  \midrule
			$M_1$                                     & (1900, 17000)        & -25                               & 700                   \\
			$M_2$                                     & (1500, 13000)        & 0                                 & 650                   \\
			$M_3$                                     & (1400, 4000)         & 5                                 & 700                   \\
			$M_4$                                     & (3000, 1300)         & 10                                & 680                   \\  \hline\hline
		\end{tabular}
	\end{table}

	
	\begin{figure}[ht!]
		\centering
		\begin{subfigure}[ht]{0.45\textwidth}
			\centering
			\adjustbox{trim={.0\width} {.1\height} {.08\width} {.1\height},clip}%
			{\includegraphics[width=\textwidth]{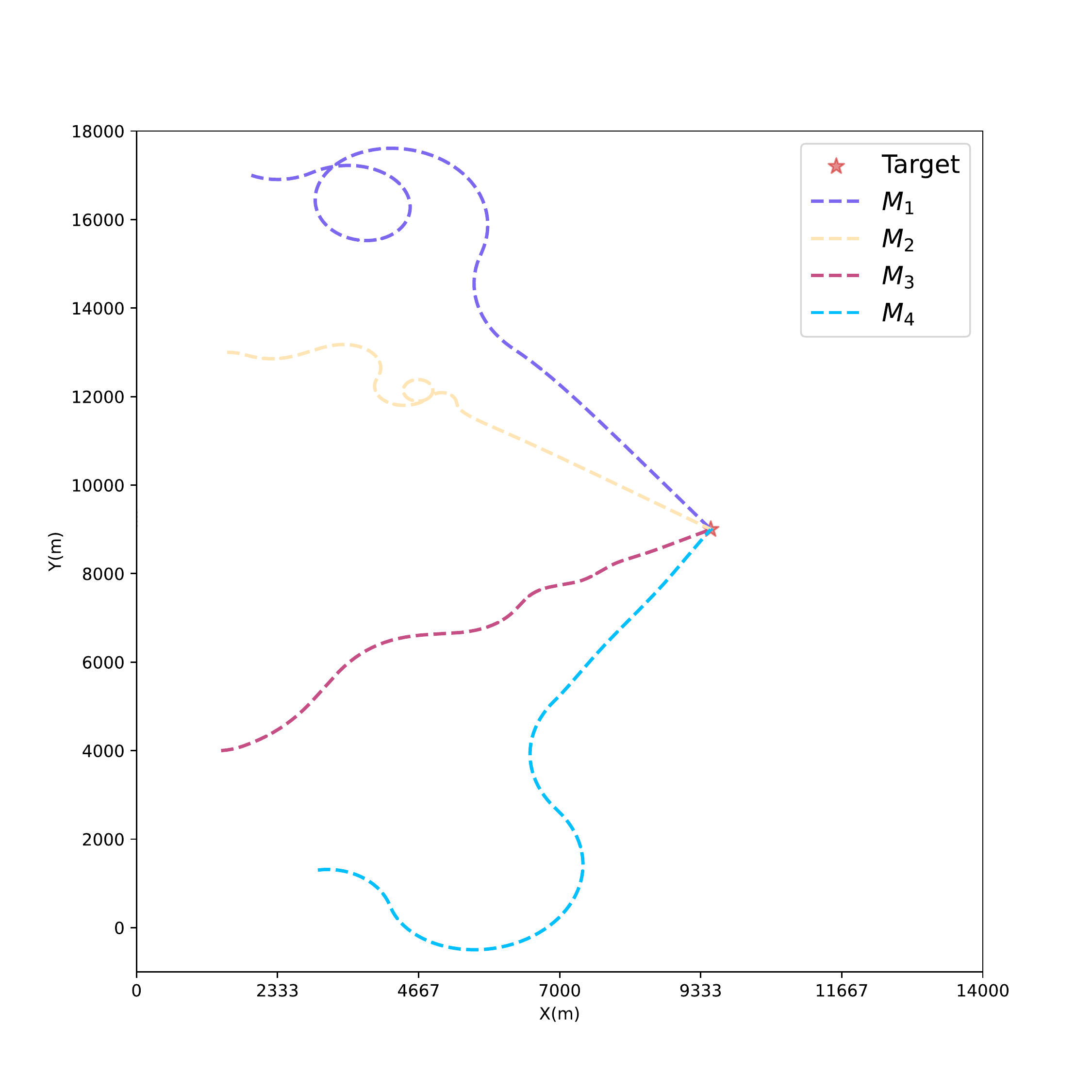}}
			
			\caption{TASCGL$^a$}\label{fig:tascglTrajCase1_a}
		\end{subfigure}
		\vfill
		\begin{subfigure}[ht]{0.45\textwidth}
			\centering
			\adjustbox{trim={.0\width} {.1\height} {.08\width} {.1\height},clip}{	\includegraphics[width=\linewidth]{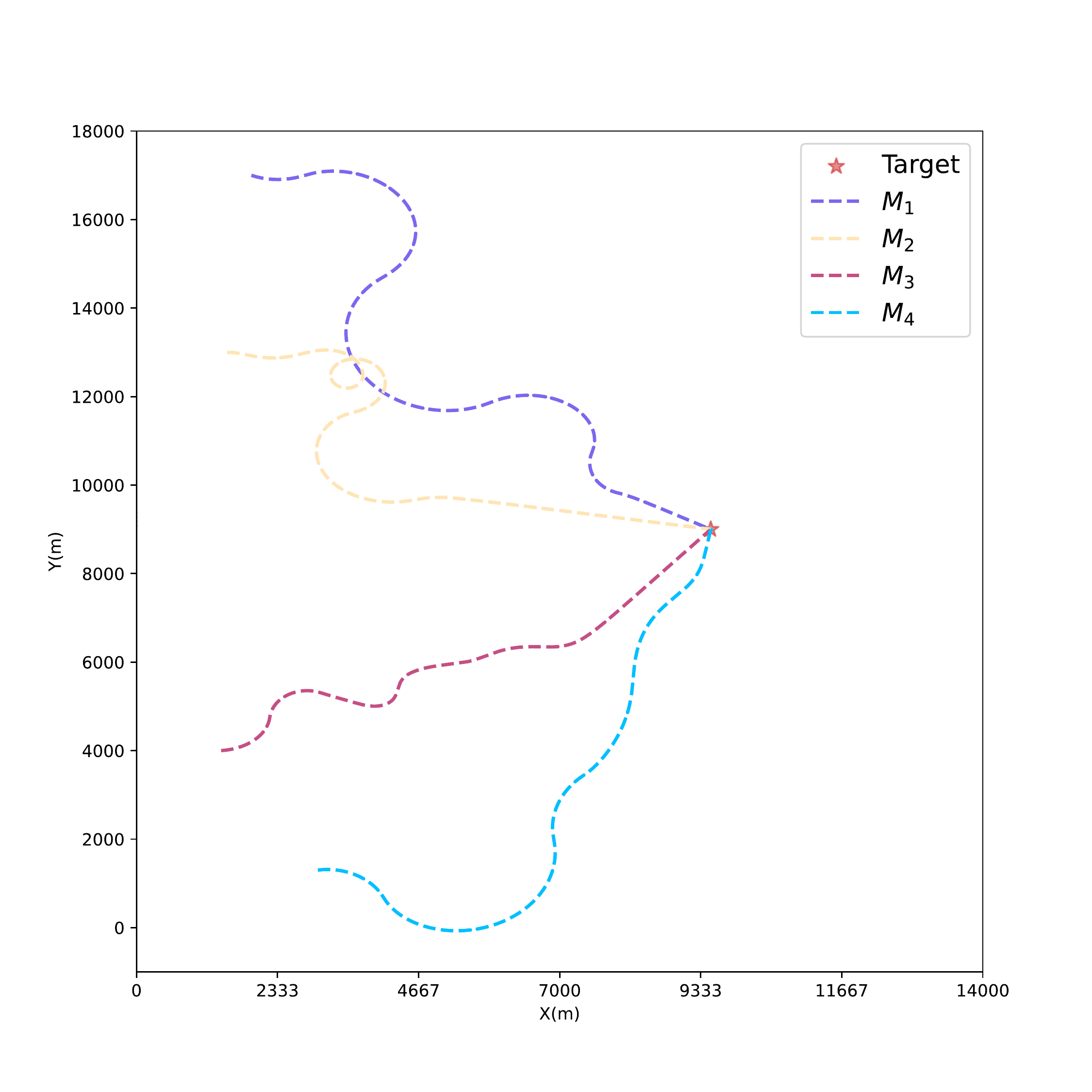}}
			\caption{TASCGL$^b$}\label{fig:tascglTrajCase1_b}
		\end{subfigure}
		\vfill
		\begin{subfigure}[ht]{0.45\textwidth}
			\centering
			\adjustbox{trim={.0\width} {.1\height} {.08\width} {.1\height},clip}{
				\includegraphics[width=\textwidth]{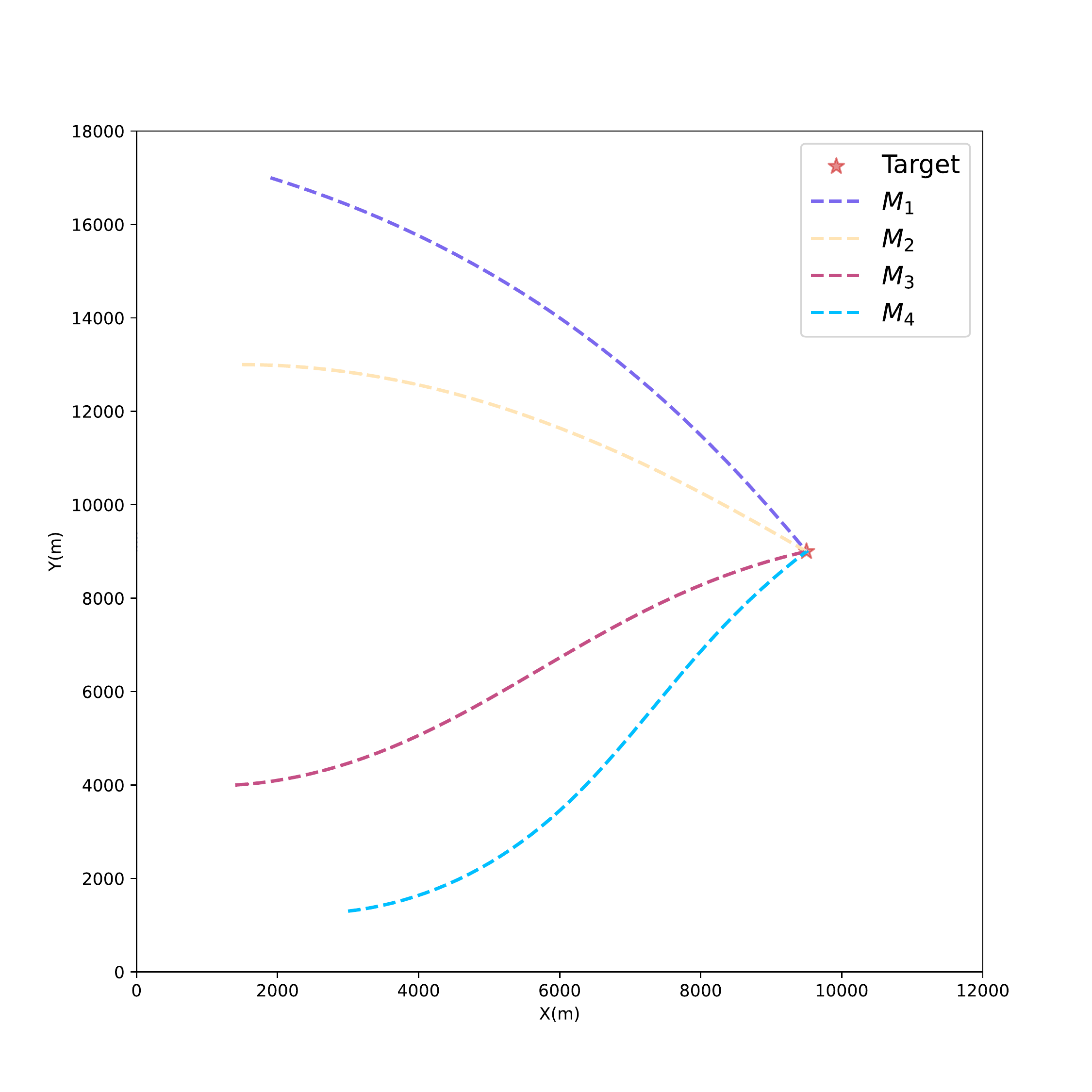}}
			\caption{HCCGL}\label{fig:HCCGLTrajCase1}
		\end{subfigure}
		\caption{Trajectories of the two methods}\label{fig:trajectoryCase1}
	\end{figure}
	
	\begin{figure}
		\centering
		\begin{subfigure}[ht]{0.5\textwidth}
			\centering
			\includegraphics[width=\textwidth]{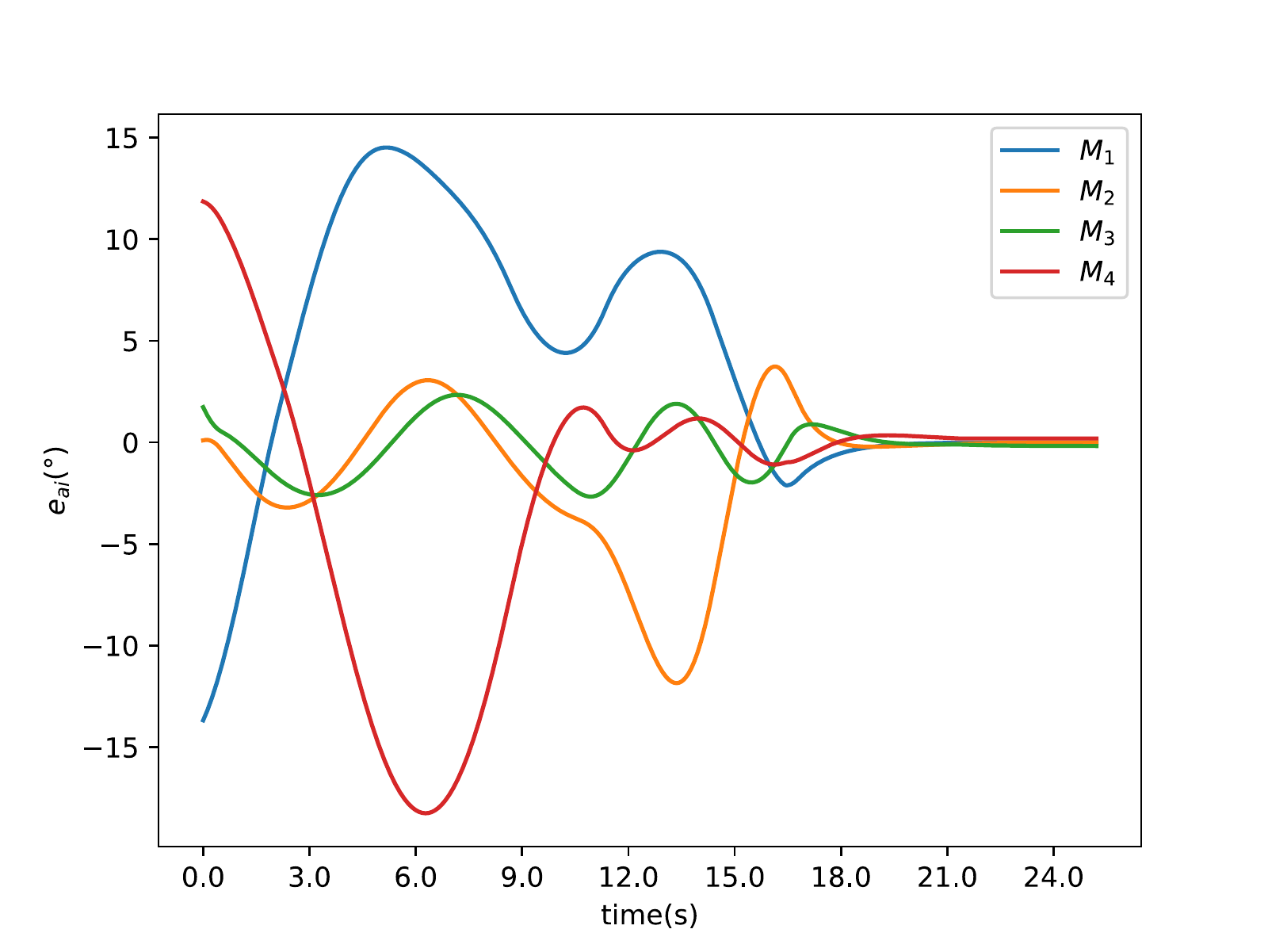}
			\caption{TASCGL$^a$}\label{fig:tascglLoserrorCase1_a}
		\end{subfigure}
		\begin{subfigure}[ht]{0.5\textwidth}
			\centering
			\includegraphics[width=\textwidth]{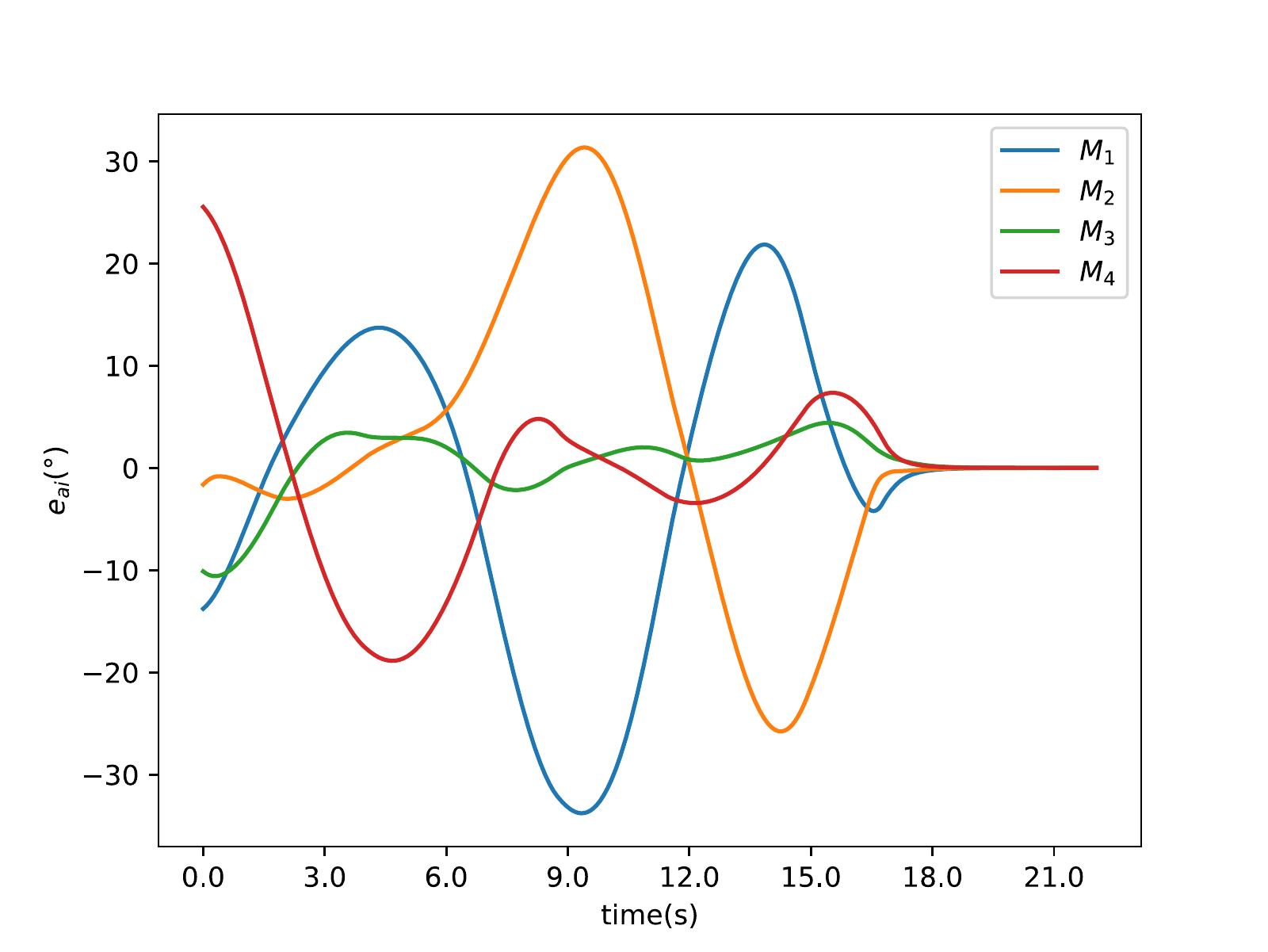}
			\caption{TASCGL$^b$}\label{fig:tascglLoserrorCase1_b}
		\end{subfigure}
		\begin{subfigure}[ht]{0.5\textwidth}
			\centering
			\includegraphics[width=\textwidth]{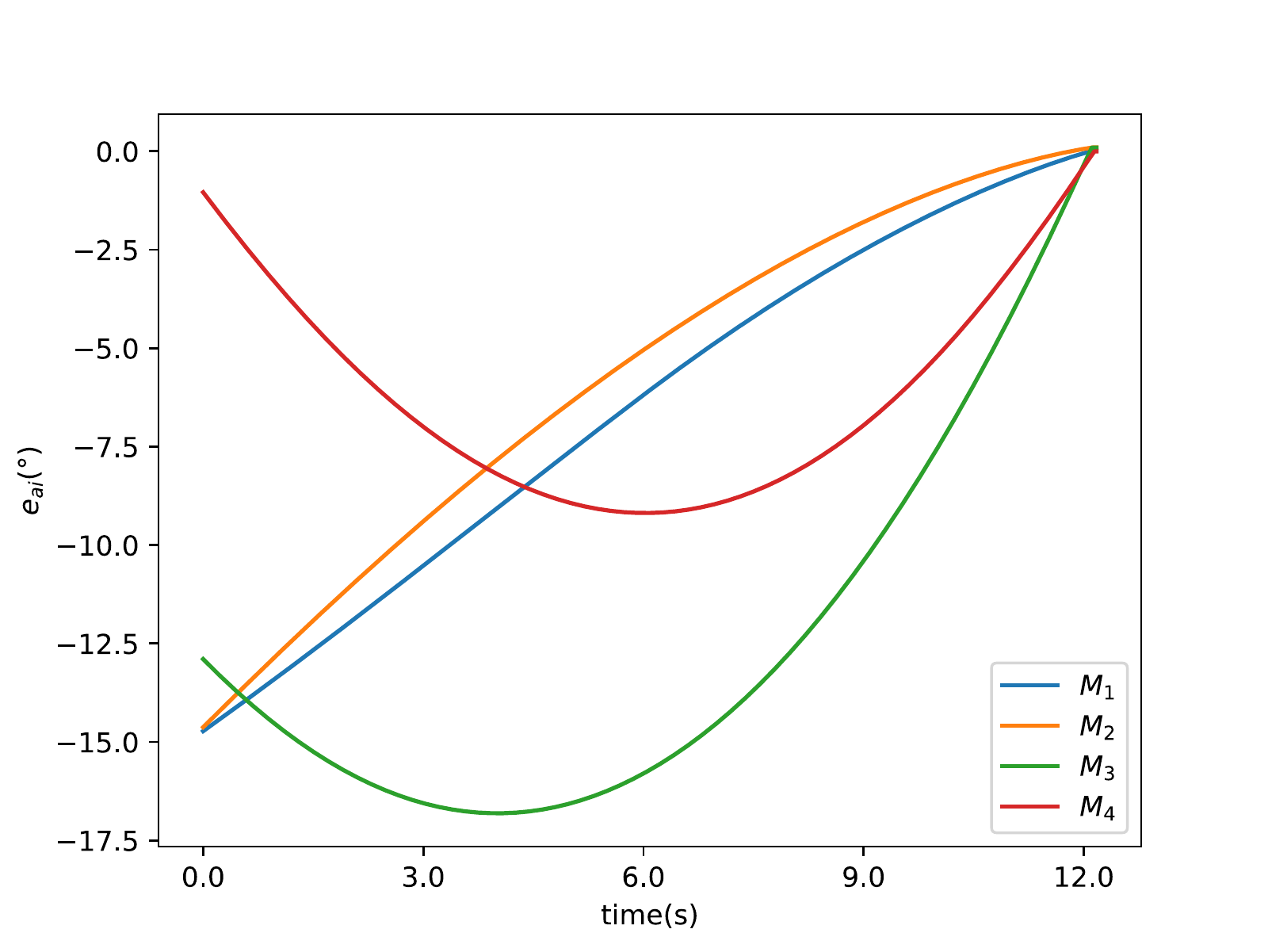}
			\caption{HCCGL}\label{fig:HCCGLLoserrorCase1}
		\end{subfigure}
		\caption{Consensus angle error profiles of the two methods}\label{fig:consensusAngleError}
	\end{figure}
	
	\begin{figure}
		\centering
		\begin{subfigure}[ht]{0.5\textwidth}
			\centering
			\includegraphics[width=\textwidth]{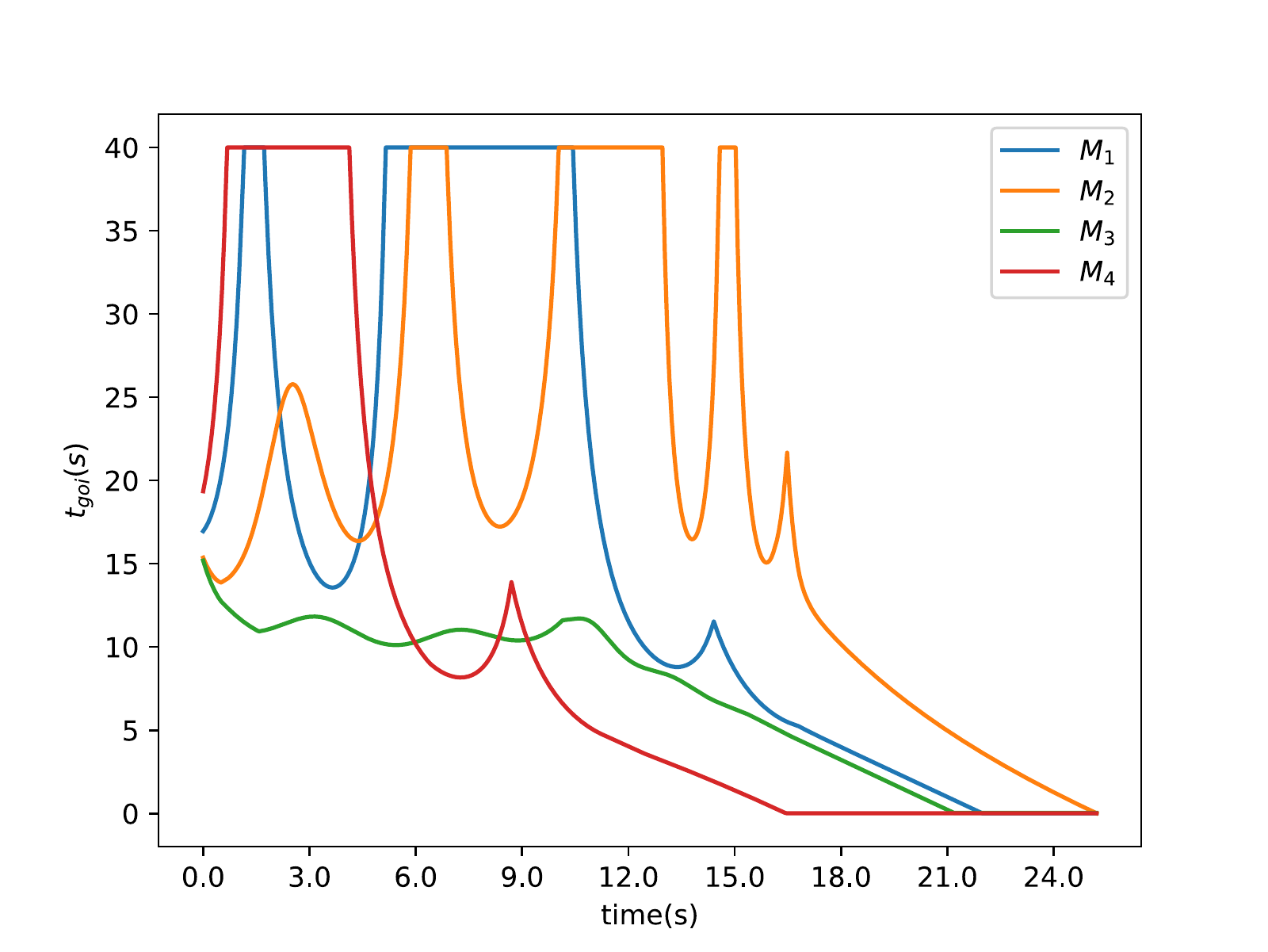}
			\caption{TASCGL$^a$}\label{fig:tascglTTGerrorCase1_a}
		\end{subfigure}
		\begin{subfigure}[ht]{0.5\textwidth}
			\centering
			\includegraphics[width=\textwidth]{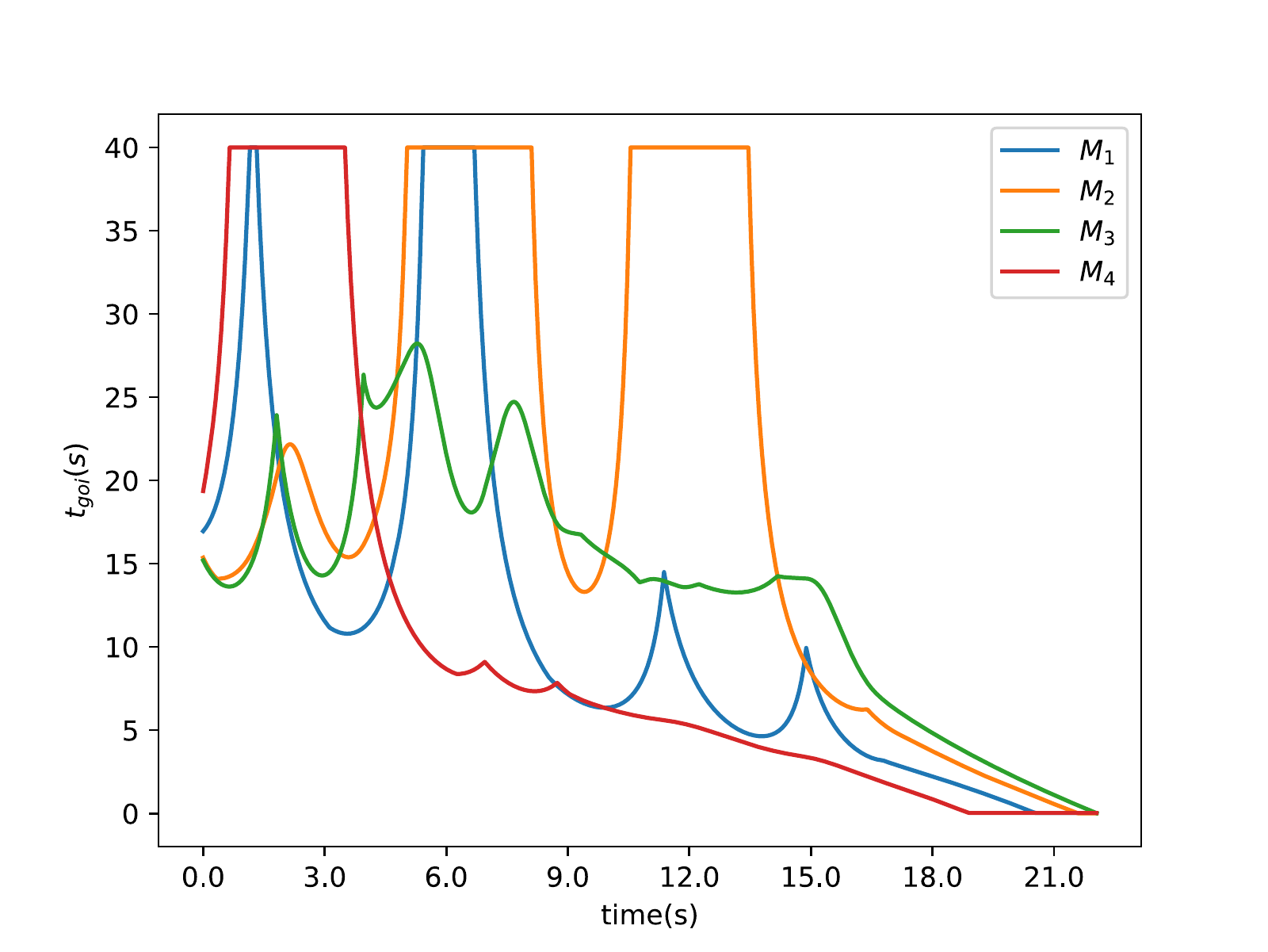}
			\caption{TASCGL$^b$}\label{fig:tascglTTGerrorCase1_b}
		\end{subfigure}
		\begin{subfigure}[ht]{0.5\textwidth}
			\centering
			\includegraphics[width=\textwidth]{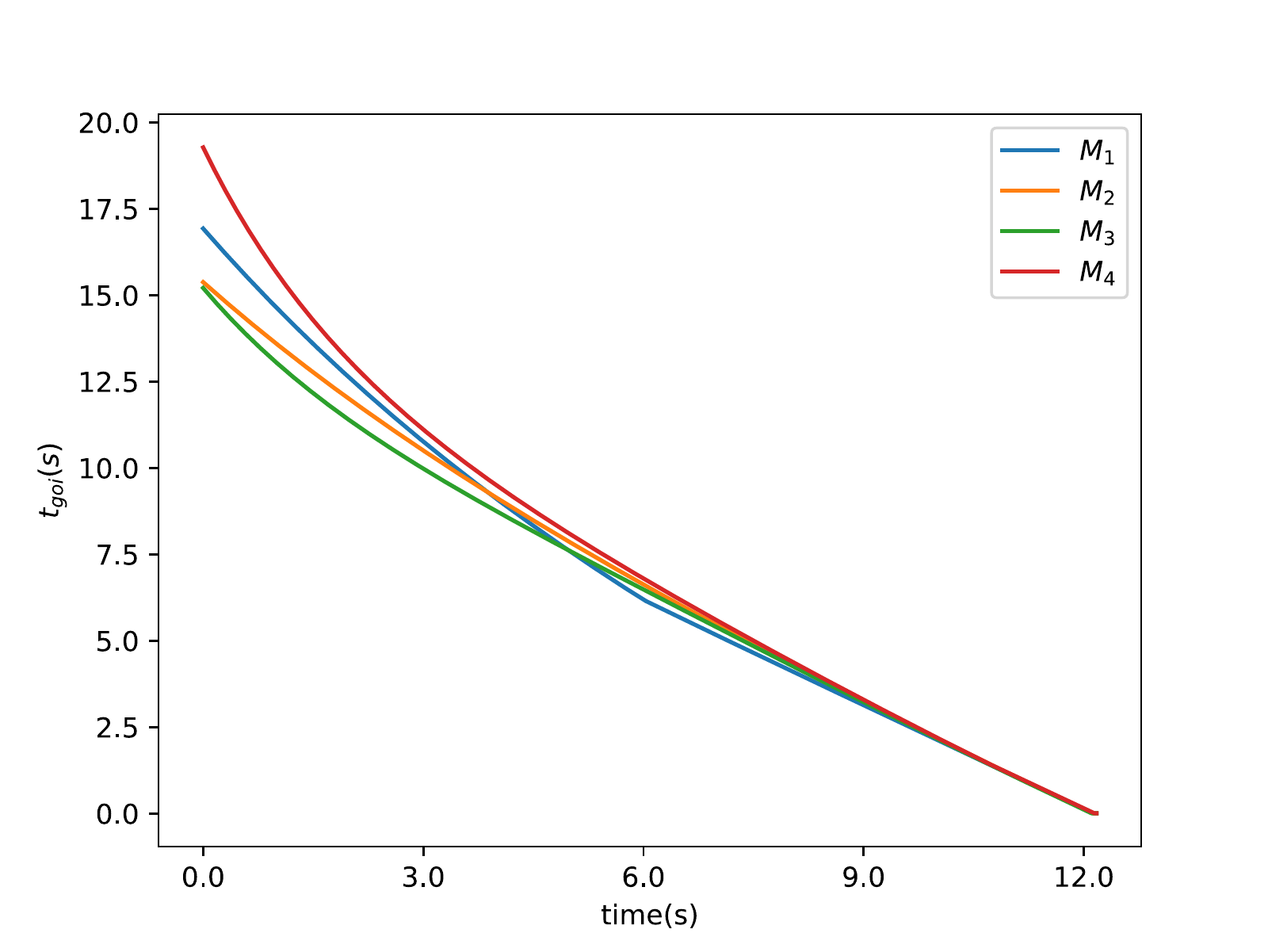}
			\caption{HCCGL}\label{fig:HCCGLTTGerrorCase1}
		\end{subfigure}
		\caption{Time-to-go profiles of the two methods}\label{fig:timeToGo}
	\end{figure}
	
	\begin{figure}[htb]
		\centering
		\begin{subfigure}[ht]{0.5\textwidth}
			\centering
			\includegraphics[width=\textwidth]{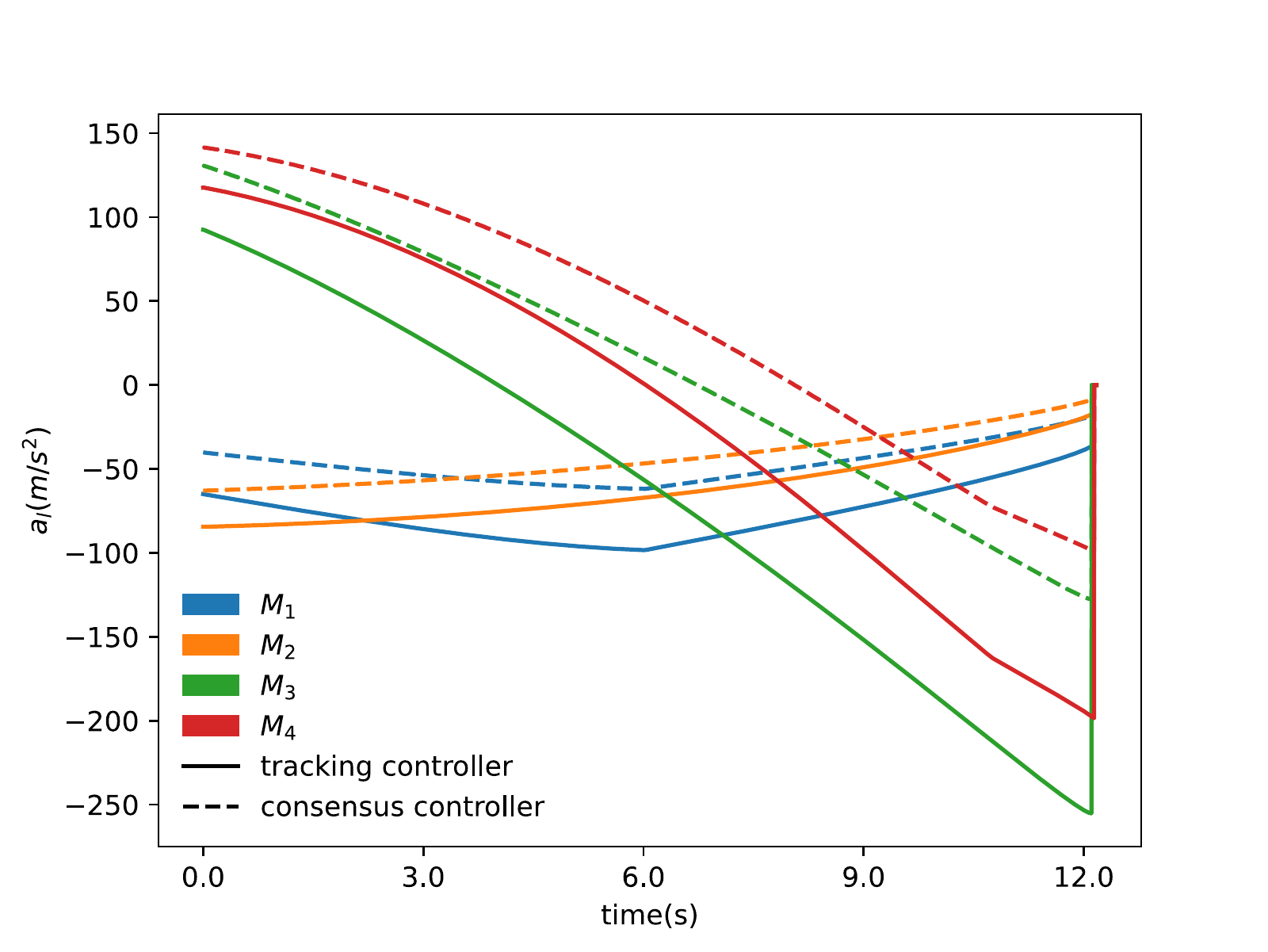}
		\end{subfigure}
		\begin{subfigure}[ht]{0.5\textwidth}
			\centering
			\includegraphics[width=\textwidth]{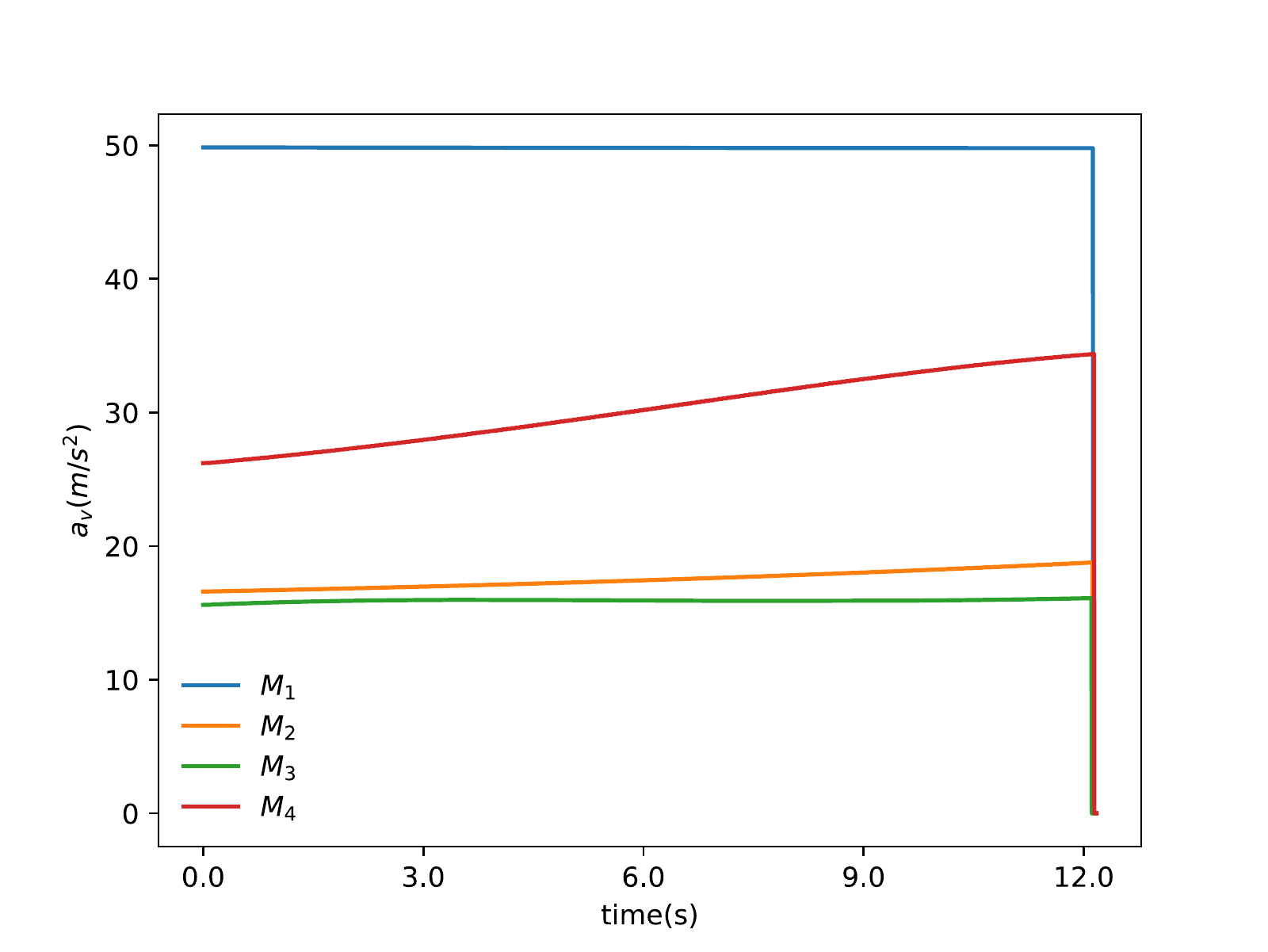}
		\end{subfigure}
		\caption{Decomposition of acceleration commands in Case 1}\label{fig:decoCase1}
	\end{figure}

	\prettyref{fig:trajectoryCase1} shows the trajectories of the two guidance laws. As depicted in the figure, the trajectory of TASCGL is  twisted at the initial stage, as the missiles try to consensus their LOS angles and velocities. In comparison, the trajectory of the proposed HCCGL exhibited better damping performance with no oscillations. 
	
	It can be seen from \prettyref{tab:ResultForCase1}
	that the Zero-Effort Miss (ZEM) and the consensus angle error for both guidance laws have achieved competitive final accuracy. 
	The consensus time error of TASCGL was up to 5 seconds under both directed and undirected topologies, whereas the proposed method achieved an error of less than 0.1 seconds.
	Further analysis of the velocity curve shows that in the case of TASCGL, the velocities are prohibited from reaching their ideal values due to the velocity boundary, which is not considered in its design, thus leading to desynchronization in impact time.
	The profiles of the two methods are shown in \prettyref{fig:consensusAngleError} and \prettyref{fig:timeToGo}, it can be observed that the flight time of all missiles under HCCGL trends to be identical.
	For HCCGL, the decomposition of acceleration commands is shown in \prettyref{fig:decoCase1}. The left figure shows the decomposition of lateral accelerations, in which the solid line represents the command from the tracking controller while the dashed line represents the command from the consensus controller before weighing. Since the tracking part is derived from proportional navigation, the vertical acceleration shown on the right one is completely derived from the consensus controller. The two parts of accelerations have similar trends but do not coincide, demonstrating the effectiveness of the consensus controller, which is trained with the improved co-evolutionary strategy.
	
	The result reveals that the proposed guidance law outperforms the compared method with higher precision in consensus performance and smoother trajectories. Moreover, as the traditional guidance law is usually constrained to boundary conditions and missile’s superb maneuverability, the proposed guidance law is more resilient to limited conditions and more intelligent to be aware of the time-varying states of missiles of collaboration.
	
	\begin{table}[htb]
		\centering
		\caption{Comparison results of two guidance laws in case 1.}
		\label{tab:ResultForCase1}
		\begin{tabular}{@{}p{0.15\columnwidth}p{0.15\columnwidth}llll@{}}
			\hline\hline
			Algorithm               & Index                         & $M_1$    & $M_2$    & $M_3$    & $M_4$   \\ \midrule
			\multirow{3}{*}{TASCGL$^a$} 
			& $e_t^i$(s) & 5.54 & 3.23    & -4.02    & -4.75   \\
			& $e_a^i$($\degree$) & -9.83\rm{E}-3 & -1.58\rm{E}-3 & -1.80\rm{E}-1 & 1.91\rm{E}-1 \\
			& ZEM(m) & 2.24\rm{E}-7  & 5.06\rm{E}-7  & -7.00\rm{E}-2    & 3.00\rm{E}-4 \\ \midrule
			\multirow{3}{*}{TASCGL$^b$} 
			& $e_t^i$(s) & 6.15\rm{E}-1 & 5.5\rm{E}-1   & 3.655    & -4.82   \\
			& $e_a^i$($\degree$) & -6.11\rm{E}-3 & -1.28\rm{E}-3 & -3.38\rm{E}-4 & 7.73\rm{E}-3 \\
			& ZEM(m) & 1.19\rm{E}-5  & 5.11\rm{E}-8  & 4.81\rm{E}-7    & 5.53\rm{E}-5 \\ \midrule
			\multirow{3}{*}{HCCGL}
			& $e_t^i$(s)  & -1.00\rm{E}-2   & 1.00\rm{E}-2    & -5.00\rm{E}-2  & 5.00\rm{E}-2    \\
			& $e_a^i$($\degree$) & 1.79\rm{E}-2  & 9.67\rm{E}-2  & 9.20\rm{E}-2 & 4.69\rm{E}-3 \\
			& ZEM(m) & 4.19\rm{E}-5  & 7.66\rm{E}-6  & 3.11\rm{E}-4  & 4.09\rm{E}-5 \\
			\hline\hline
		\end{tabular}
	\end{table}
	
	\subsection{Case 2: Non-stationary target}
	In this part, an engagement scenario with a non-stationary target is designed and simulated to verify the effectiveness of the proposed method against unknown dynamic target. The target is maneuvering with lateral acceleration $a_t = 5g\sin(\frac{\pi}{7}t)$ with its velocity fixed at $V_t= 130 m/s$,  and its initial flight-path angle $\alpha_T=162\degree$. Other initial conditions are the same in case 1. Simulation trajectory and the result can be seen in \prettyref{fig:trajCase2} and \prettyref{tab:ResultForCase2}.
	
	
	\begin{figure}[htbp]
		\centering
		\includegraphics[width=0.45\textwidth]{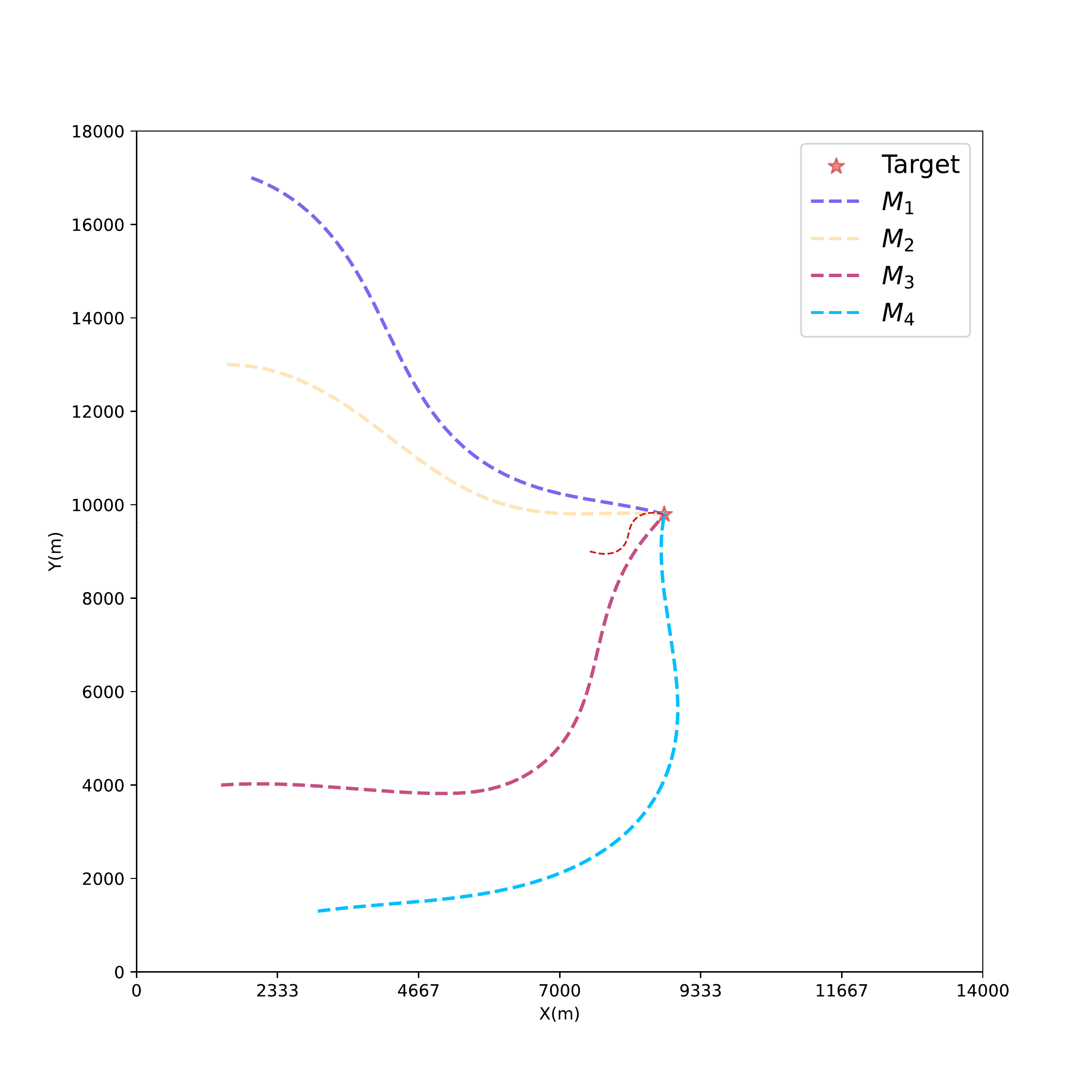}
		\caption{Trajectory in case 2}\label{fig:trajCase2}
	\end{figure}
	
	\begin{figure}[htb]
		\centering
		\begin{subfigure}[ht]{0.5\textwidth}
			\centering
			\includegraphics[width=\textwidth]{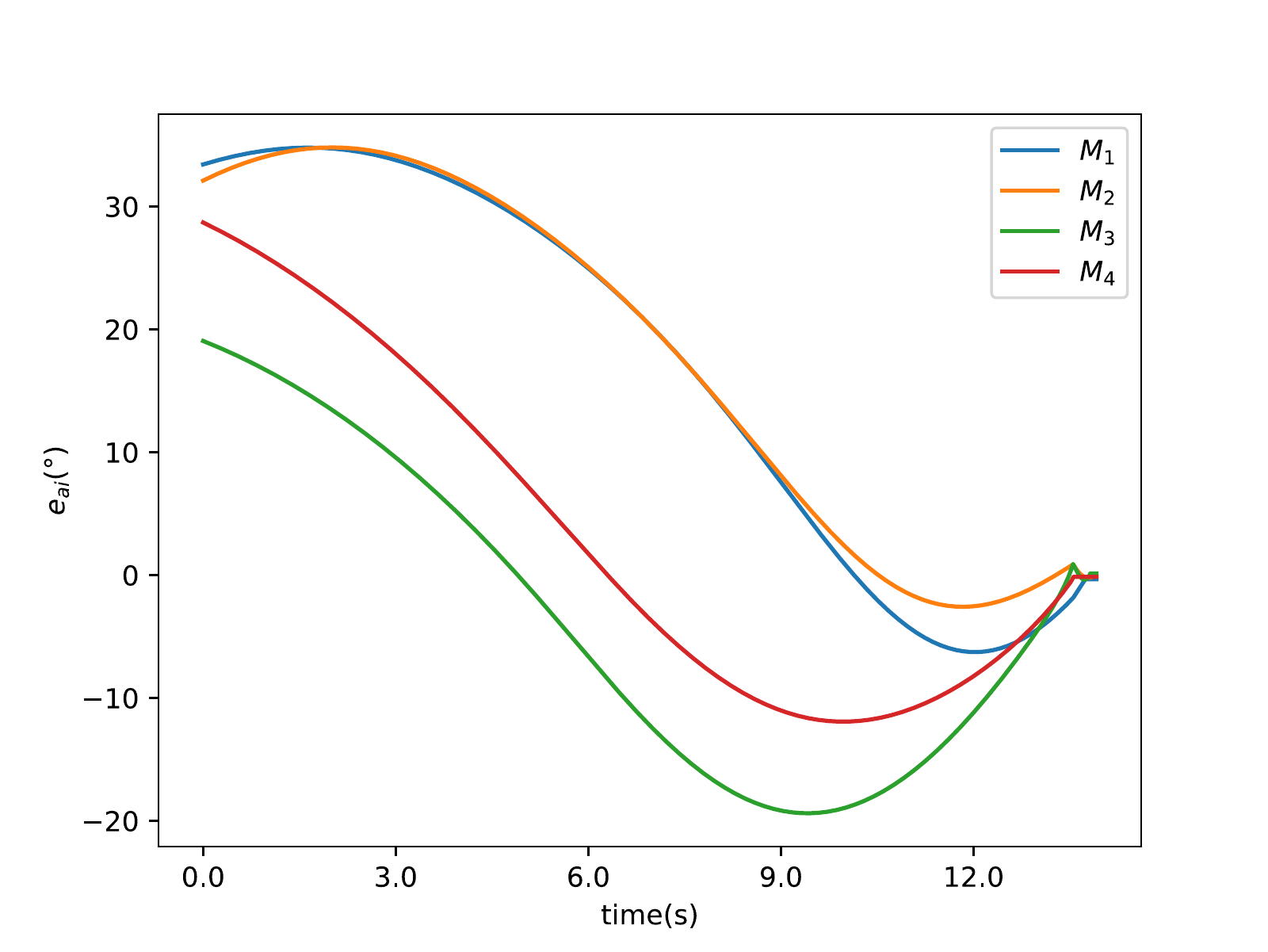}
			\caption{Consensus angle error profile}\label{fig:loserrorCase2}
		\end{subfigure}
		\begin{subfigure}[ht]{0.5\textwidth}
			\centering
			\includegraphics[width=\textwidth]{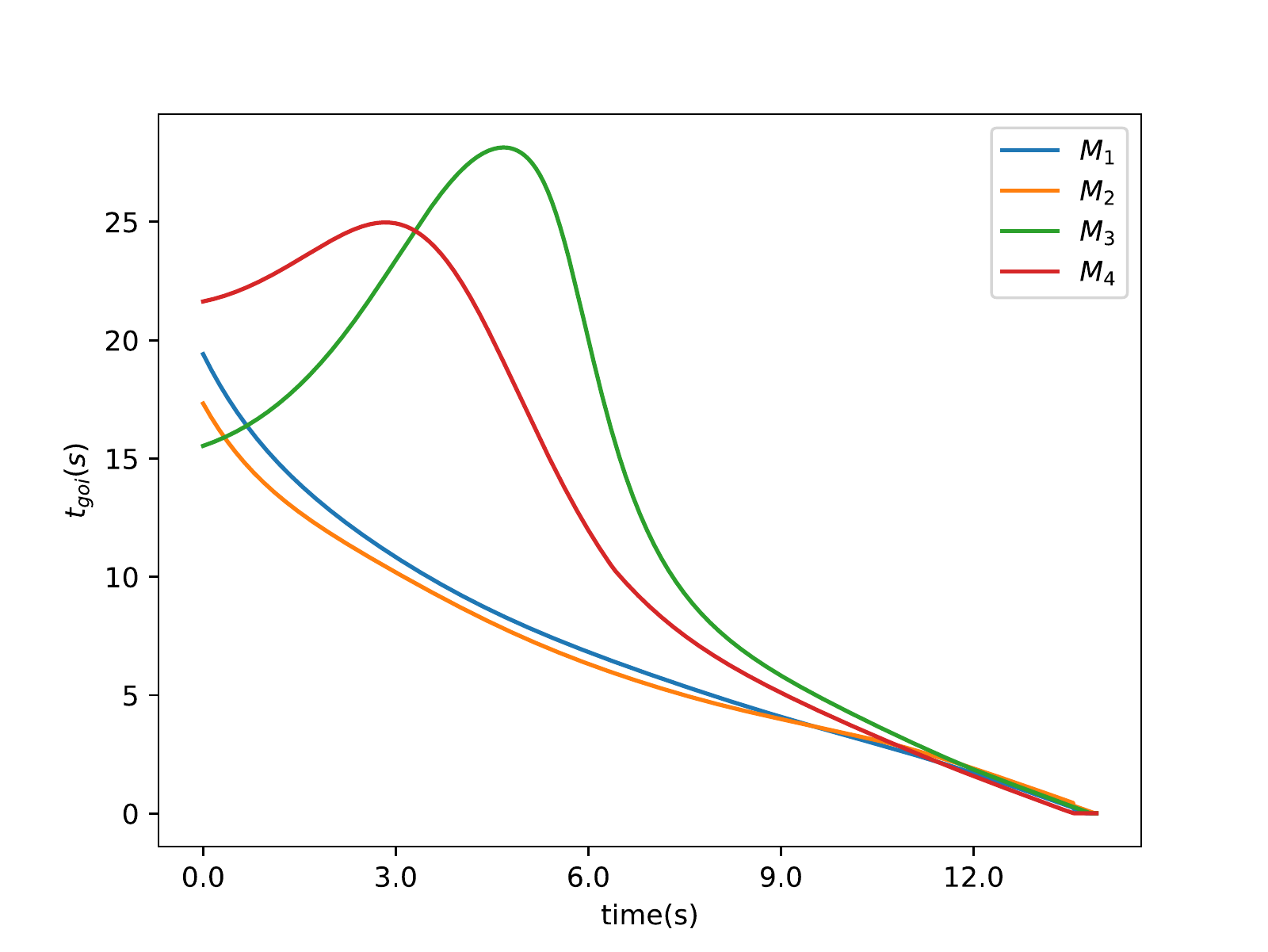}
			\caption{Time-to-goes profile}\label{fig:ttgerrorCase2}
		\end{subfigure}
		\caption{Flight data profiles in Case 2}\label{fig:profilesCase2}
	\end{figure}

	From \prettyref{tab:ResultForCase2} we can see that the consensus angle error is within one degree, which is sufficient for the accuracy requirement, and salvo attack is achieved with negligible consensus time error. 
	The result demonstrates the effectiveness of the proposed guidance method in intercepting the dynamic target. As far as the author knows, it is the first time achieving cooperative guidance against non-stationary target with intelligent control, which shows its extraordinary robustness against disturbance from non-stationary objectives.
	
	\begin{table}[h]
		\centering
		\caption{Result for Case 2.}
		\label{tab:ResultForCase2}
		\begin{tabular}{@{}lllll@{}}
			\hline\hline
			Index                         & $M_1$    & $M_2$   & $M_3$    & $M_4$     \\ \midrule
			$e_a^i$($\degree$) & -3.43\rm{E}-1 & -6.53\rm{E}-2 & -1.23\rm{E}-1 & -1.49\rm{E}-1 \\
			$e_t^i$(s)            & 6.00\rm{E}-2    & 1.85\rm{E}-2   & 2.10\rm{E}-1   & -4.55\rm{E}-1      \\
			ZEM(m)              & 1.83\rm{E}-3  & 6.38\rm{E}-3 & 2.49\rm{E}-2  & 9.28\rm{E}-1   \\ \hline\hline
		\end{tabular}
	\end{table}

	\subsection{Case 3: Monte-Carlo simulation}
	Monte-Carlo simulation has been extensively employed to examine the robustness of an algorithm under varying initial conditions, thus it is applied in this section. In the existing literature, the target is usually regarded as stationary as interception of a stationary target is more exclusive of unpredictable disturbance. In this case, five missiles are engaged, and each missile's position is randomly sampled from a uniform distribution, which is denoted by  $U(\cdot, \cdot)$. Specifically, for the $i^{th}$ missile, the x-coordinate of its position is $U(2000, 2600)$ and the y-coordinate is $U(11000, 13000) - 2000i$, which makes the missiles arranged in an orderly manner. The initial flight-path angles of all missiles are set to $0\degree$, with identical velocities of 600m/s and the same desired relative impact angles of $25\degree$. Additionally, the target's position is (10000m, 9000m).
	
	Simulations with randomly sampled conditions are conducted in 200 episodes. The diverse trajectories are depicted in \prettyref{fig:montecarloCase3}, and the statistical result after taking the absolute value is shown in \prettyref{tab:ResultForCase3}. From the result, we can see that the mean errors of impact angles are within $1\degree$, and the consensus error of impact time holds within $1\mathrm{s}$ most of the time. The result shows that for any initial state with limited error, the proposed scheme can always find the relative optimal solution.
	
	\begin{figure}[!htb]
		\centering
		\includegraphics[width=0.45\textwidth]{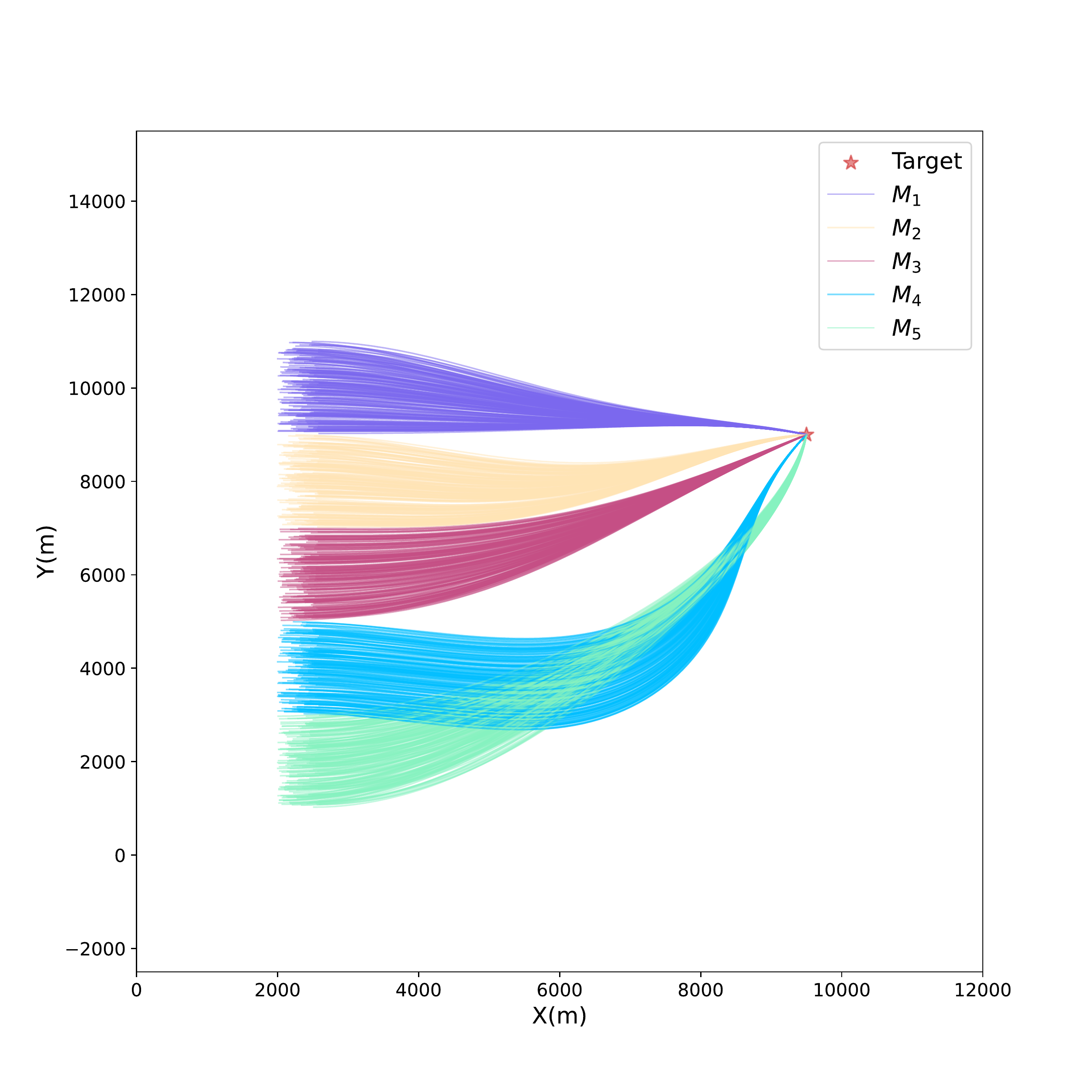}
		\caption{Diverse trajectories of the Monte-Carlo simulation}\label{fig:montecarloCase3}
	\end{figure}
	
	\begin{table}[!htb]
		\centering
		\caption{Result for Case 3.}
		\label{tab:ResultForCase3}
		\begin{tabular}{@{}p{0.13\columnwidth}llllll@{}}
			\hline\hline
			Index                                          &      & $M_1$   & $M_2$   & $M_3$   & $M_4$   & $M_5$   \\ \midrule
			\multirow{3}{*}{\makecell{$e_a^i$($\degree$)}} 
			& Mean & 4.50\rm{E}-1 & 8.20\rm{E}-1 & 7.10\rm{E}-1 & 2.20\rm{E}-1 & 9.30\rm{E}-1 \\ \cmidrule(lr){2-2}
			& Max  & 1.85\rm{E}-0 & 3.37\rm{E}-0 & 1.96\rm{E}-0 & 6.10\rm{E}-1 & 2.37\rm{E}-0 \\ \cmidrule(lr){2-2}
			& Min  & 4.56\rm{E}-3 & 6.40\rm{E}-3 & 7.01\rm{E}-4 & 4.58\rm{E}-4 & 6.46\rm{E}-3 \\ \midrule
			\multirow{3}{*}{\makecell{$e_t^i$(s)}}          
			& Mean & 6.10\rm{E}-1 & 5.50\rm{E}-1 & 5.30\rm{E}-1 & 4.50\rm{E}-1 & 5.50\rm{E}-1 \\ \cmidrule(lr){2-2}
			& Max  & 1.78\rm{E}-0 & 1.57\rm{E}-0 & 1.63\rm{E}-0 & 1.54\rm{E}-0 & 1.44\rm{E}-0 \\ \cmidrule(lr){2-2}
			& Min  & 1.50\rm{E}-2 & 1.00\rm{E}-2 & 1.78\rm{E}-15 & 5.00\rm{E}-3 & 1.78\rm{E}-15 \\ \midrule
			\multirow{3}{*}{\makecell{ZEM(m)}}              
			& Mean & 5.85\rm{E}-3 & 5.89\rm{E}-3 & 3.74\rm{E}-4 & 9.05\rm{E}-4 & 9.93\rm{E}-4 \\ \cmidrule(lr){2-2}
			& Max  & 1.03\rm{E}-2 & 1.07\rm{E}-2 & 7.77\rm{E}-4 & 2.53\rm{E}-3 & 3.39\rm{E}-3 \\ \cmidrule(lr){2-2}
			& Min  & 2.18\rm{E}-3 & 2.04\rm{E}-3 & 6.42\rm{E}-5 & 1.68\rm{E}-5 & 2.72\rm{E}-6 \\ \hline\hline
		\end{tabular}
	\end{table}
	
	\subsection{Optimization process analysis}
	\prettyref{fig:meanfitsAll} shows the learning curves in the three cases. The mean fitness in case 1 keeps moving upper and merges together at the final phase. From the curve of case 2, we can see that two of the missiles get ahead about 1000 scores, but finally back to meet with the other missiles. A similar phenomenon also appears in case 3. It can be inferred that the policies asymptotically evolved to the equilibrium state, and one reason is that the rescaled gradient prohibited the ever-increasing gap between individual groups, which is crucial for mutual improvement. If one group gets ahead too much, then the other groups may never chase up due to the interrelationship, which is to say that the improvement of the poorer-performed group is prohibited when more significant drops in the better-performed ones will occur.
	\prettyref{fig:lrAll} presents the adaptation profiles of learning rates applying the aforementioned technique. For case 1 and case 2, the learning rates start from high values and gradually converge to the minimal value, which corresponds with the quality of estimated gradients. However, due to the random initial conditions in case 3, the learning rates will not settle easily. 
	The extensive empirical result shows that without the learning rate adaptation, the fitness profiles will jitter in the end instead of converging to satisfactory ranges (regardless of the types of optimizer). Note that it is pretty common when training neural networks and may presumably have been caused by overfitting, according to related research in the field. Employing the simple adaptation technique contributes to distressing this deficiency.
	\begin{figure}[ht]
		\centering
		\begin{subfigure}[ht]{0.45\textwidth}
			\centering
			\includegraphics[width=\textwidth]{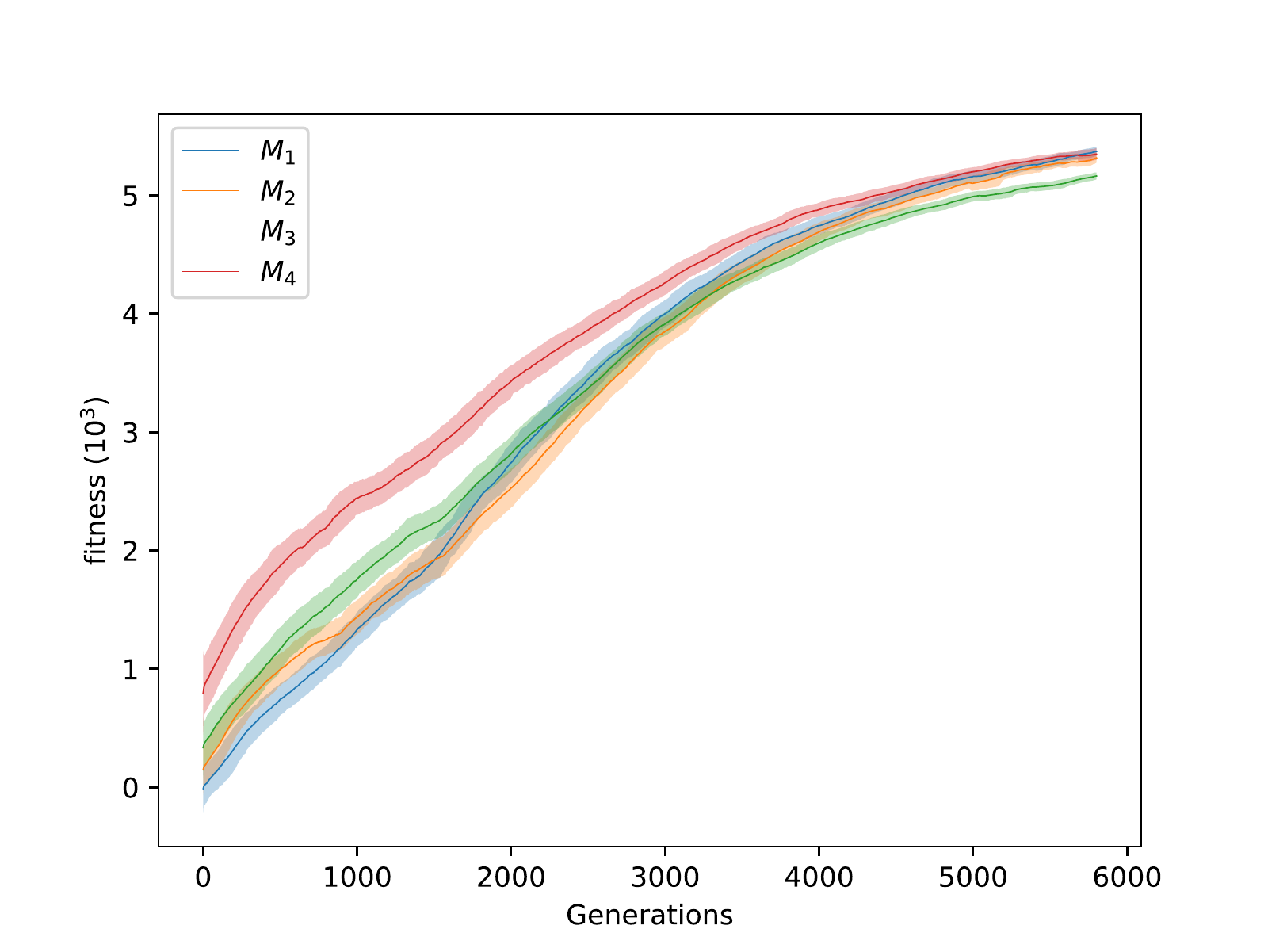}
			\caption{Case 1}\label{fig:meanfitsCase1}
		\end{subfigure}
		\begin{subfigure}[ht]{0.45\textwidth}
			\centering
			\includegraphics[width=\textwidth]{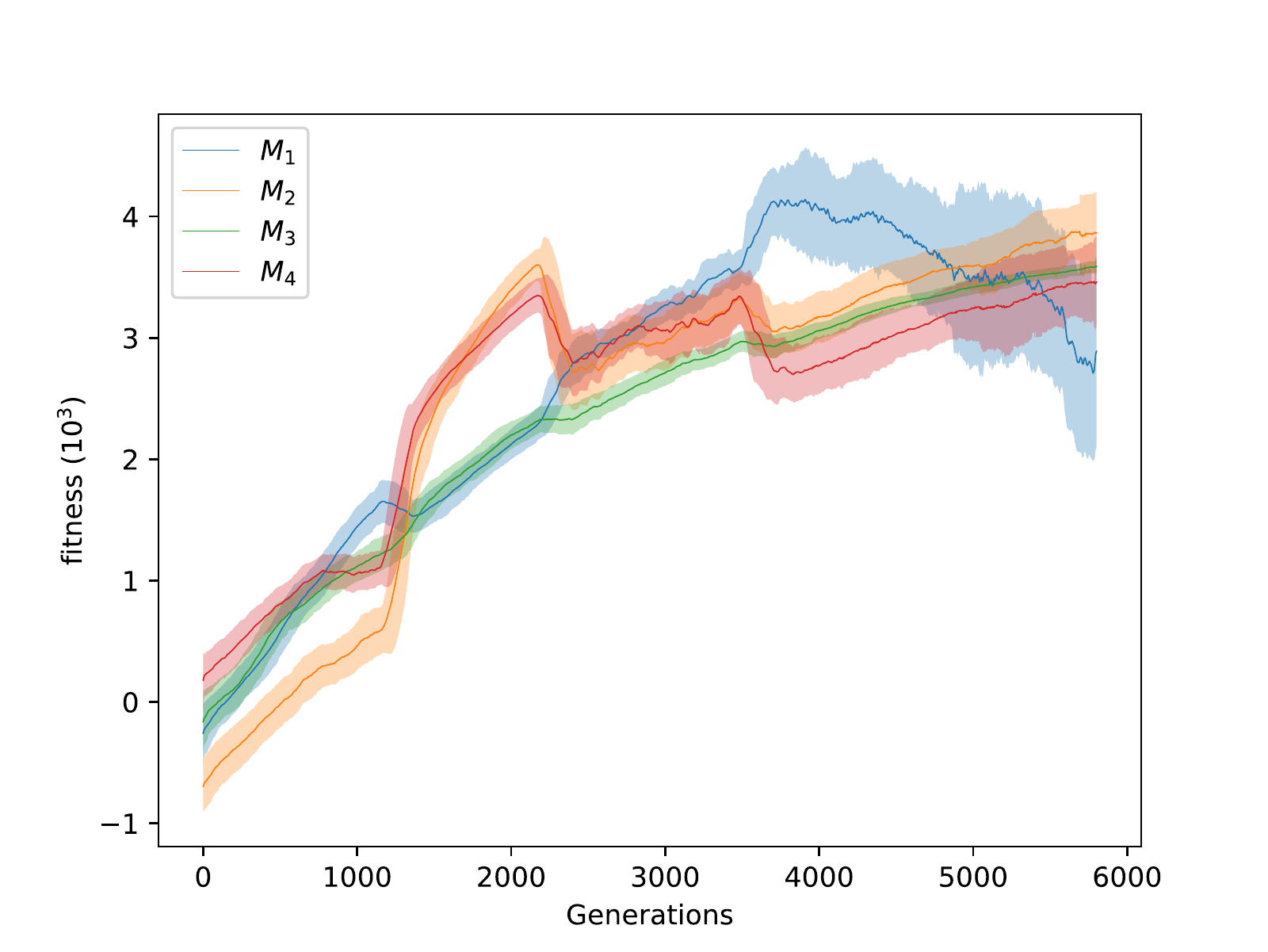}
			\caption{Case 2}\label{fig:meanfitsCase2}
		\end{subfigure}
		
		\begin{subfigure}[ht]{0.45\textwidth}
			\centering
			\includegraphics[width=\textwidth]{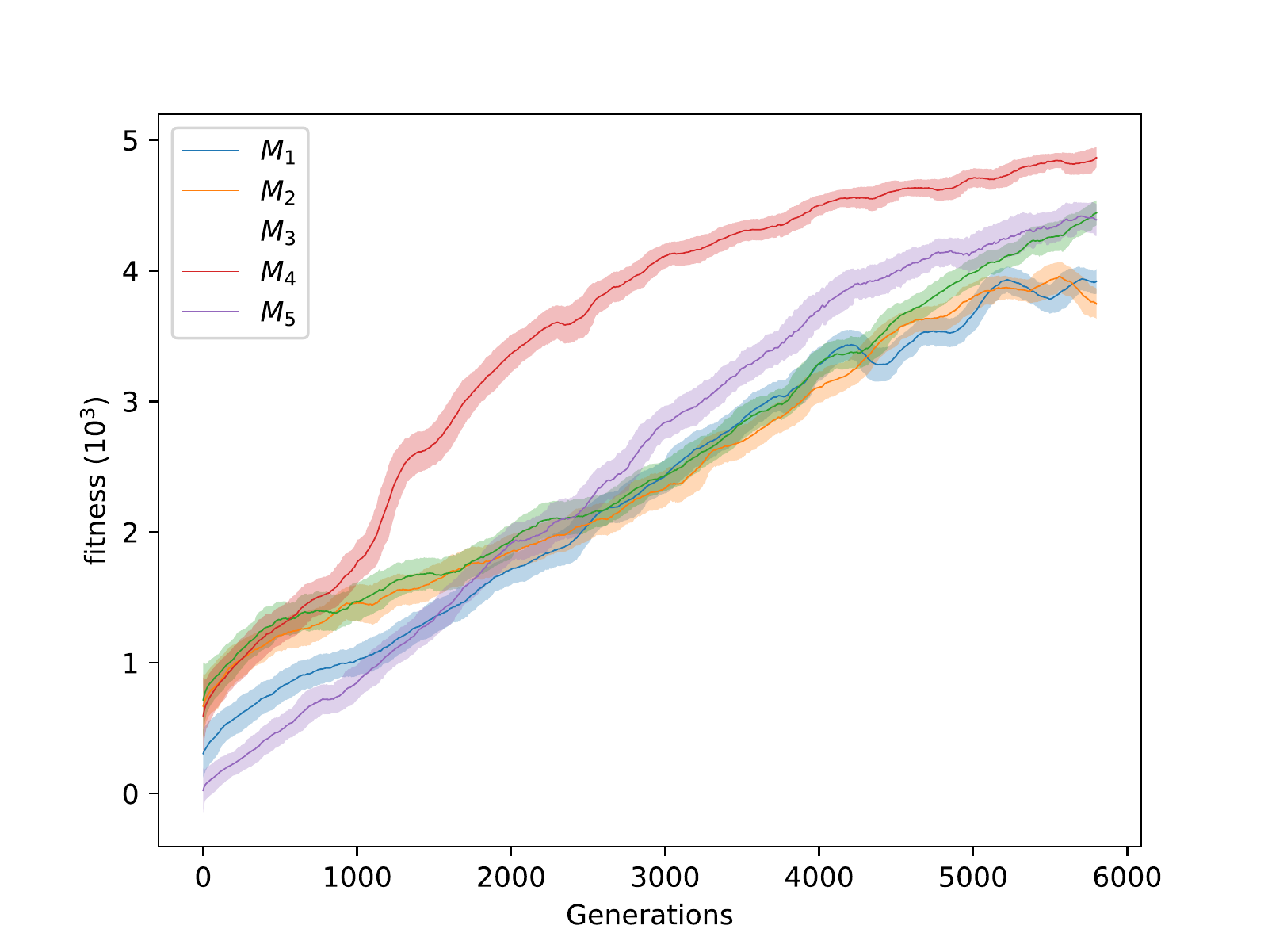}
			\caption{Case 3}\label{fig:meanfitsCase3}
		\end{subfigure}
		\caption{Mean fitness profiles over iterations of three cases}\label{fig:meanfitsAll}
	\end{figure}
	
	\begin{figure}[h]
		\centering
		\begin{subfigure}[ht]{0.45\textwidth}
			\centering
			\includegraphics[width=\textwidth]{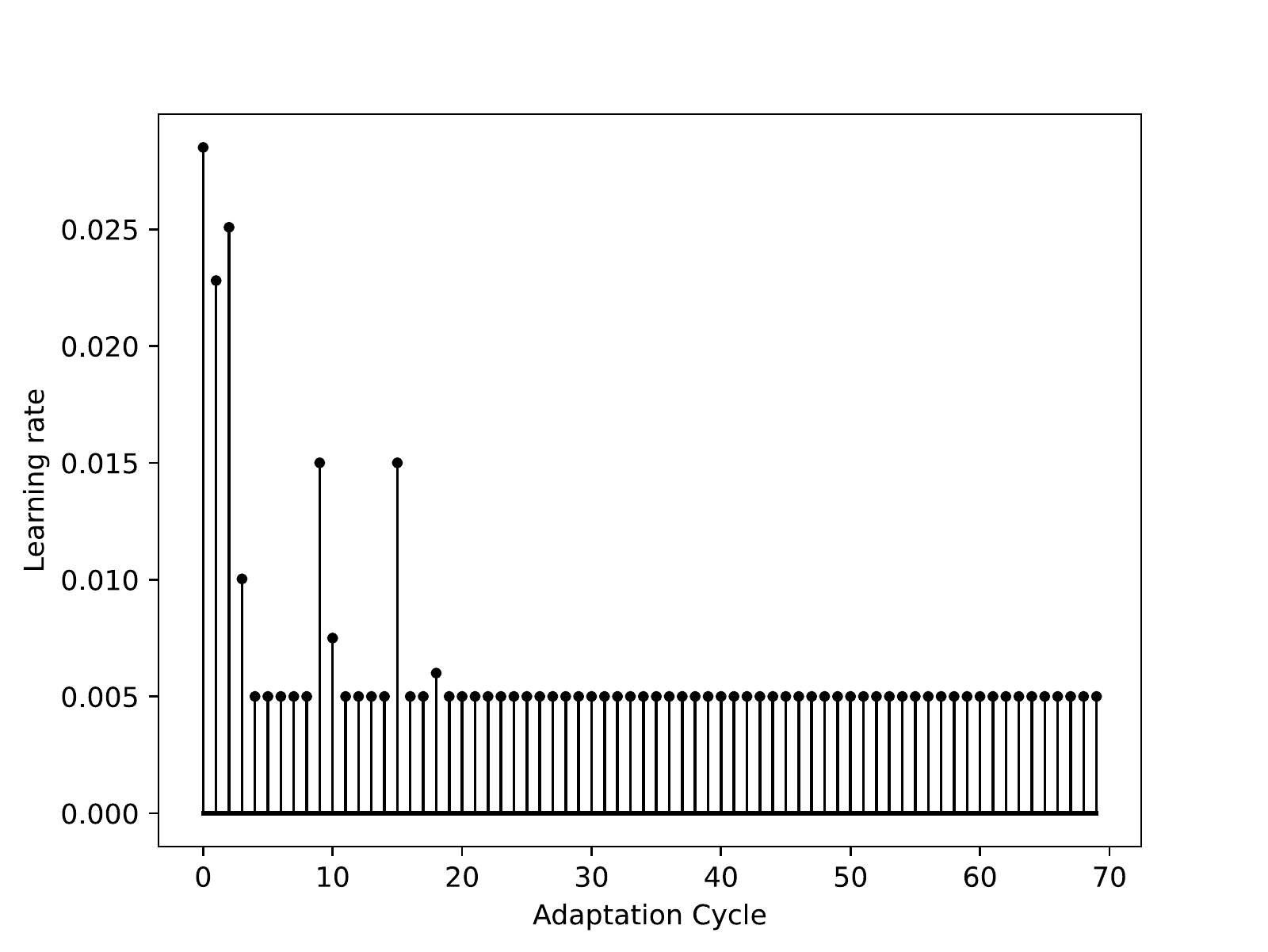}
			\caption{Case 1}\label{fig:lrCase1}
		\end{subfigure}
		\begin{subfigure}[ht]{0.45\textwidth}
			\centering
			\includegraphics[width=\textwidth]{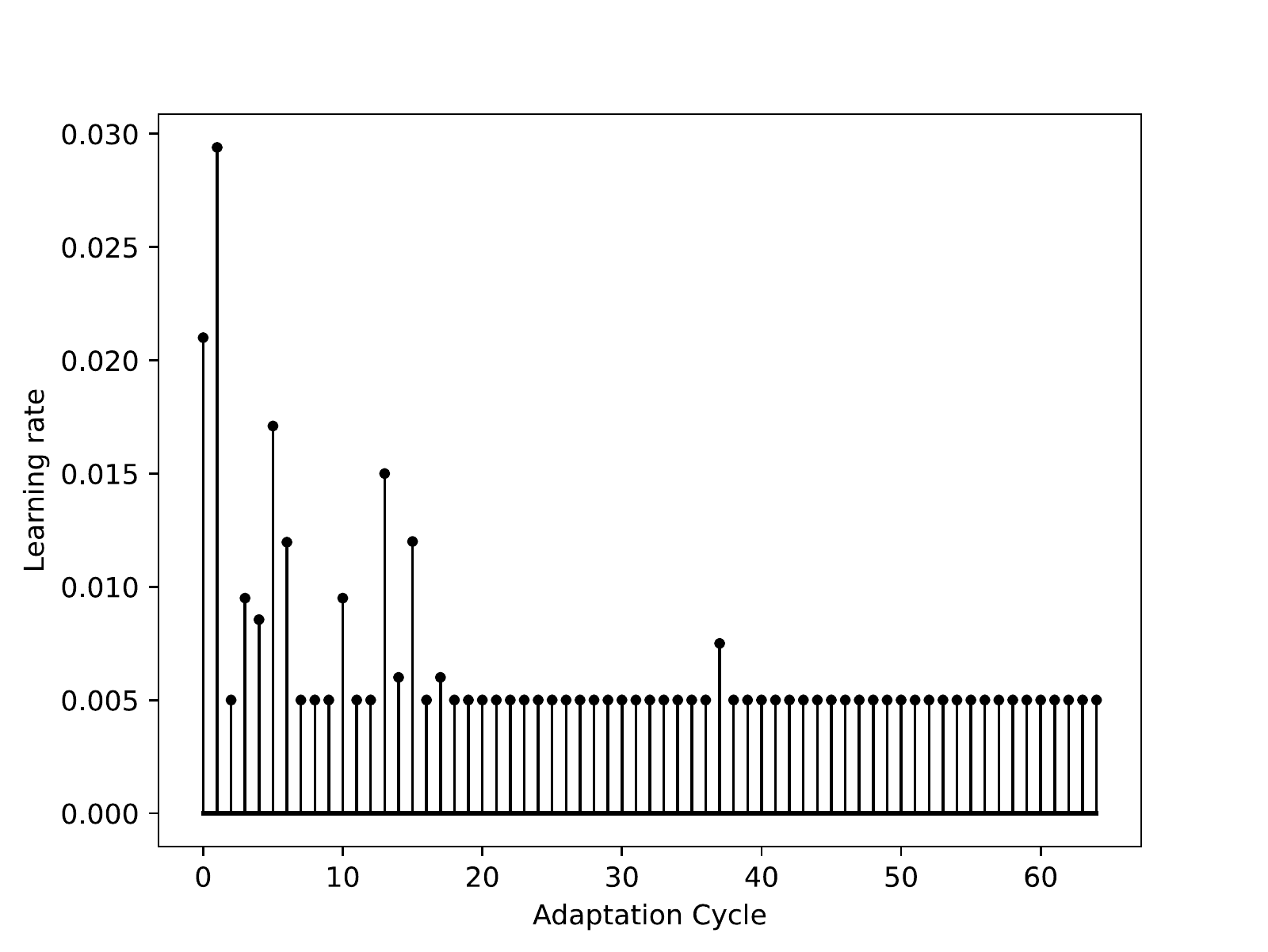}
			\caption{Case 2}\label{fig:lrCase2}
		\end{subfigure}
		
		\begin{subfigure}[ht]{0.45\textwidth}
			\centering
			\includegraphics[width=\textwidth]{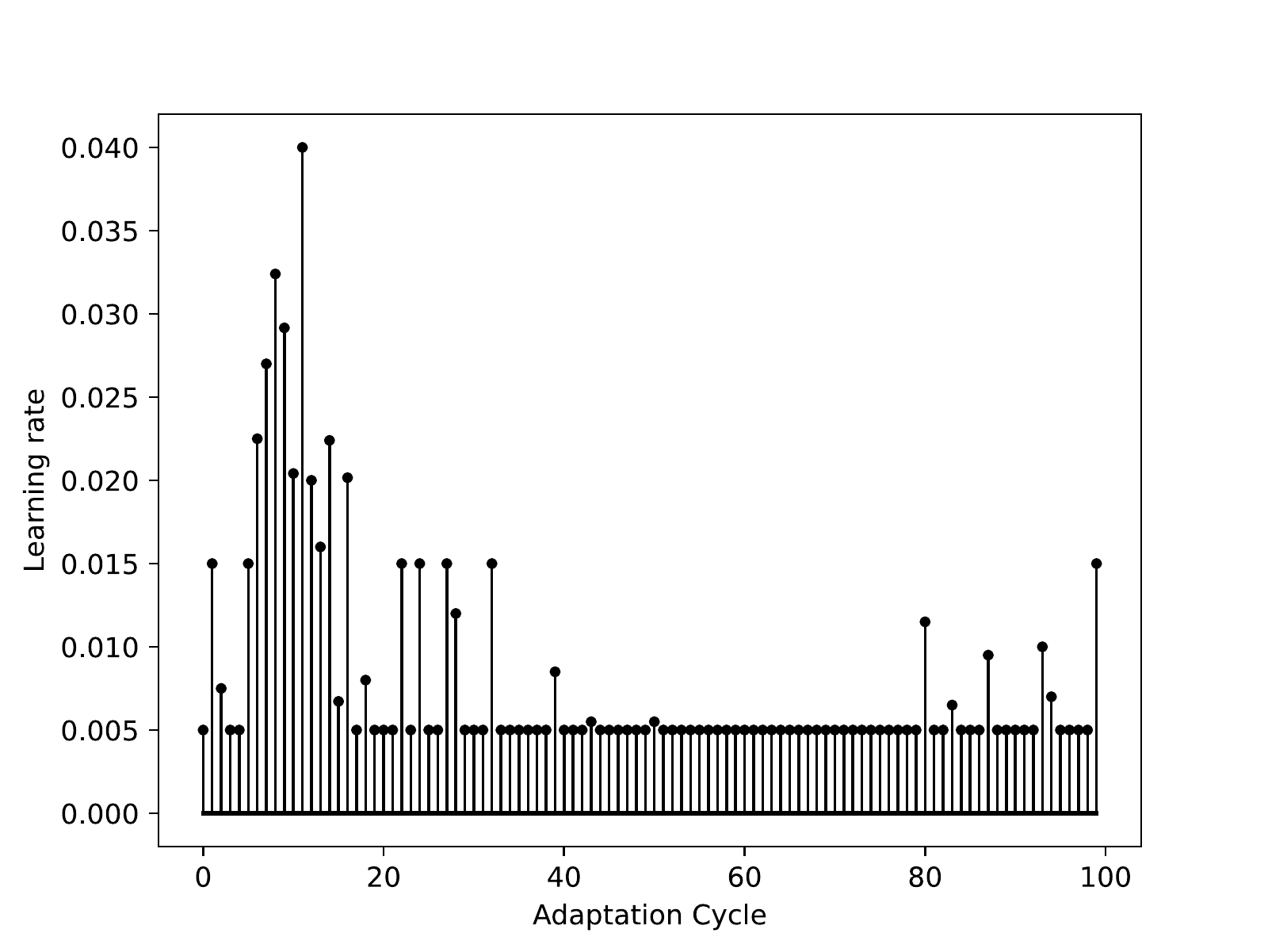}
			\caption{Case 3}\label{fig:lrCase3}
		\end{subfigure}
		\caption{Learning rate profiles over evaluation iterations of three cases}\label{fig:lrAll}
	\end{figure}
	
	\section{Conclusions}
	\label{sec:conclusions}
	
	In this paper, an improved co-evolutionary strategy NCES has been developed to solve the non-stationarity issue in multi-agent dynamic environments. The hybrid co-evolutionary cooperative guidance law(HCCGL) has been proposed to integrate with the improved strategy, and the neural network has been used to construct the consensus controller. To fully demonstrate its effectiveness in synchronizing impact time and angles, three experiments under different conditions have been carried out. Experiment on maneuvering target has been proven effective with satisfactory precision. The proposed method is shown to be robust and can be well scaled to solve the cooperative guidance problem for the multi-agent system, which is the first time an intelligent cooperative guidance law is applied to intercept a non-stationary target with time and angle constraints in the existing studies. 
	
	The proposed algorithm combines traditional control theories with intelligent algorithms, revealing the enormous potential in this field. It is always meaningful to explore the limits of modern control tasks. Despite the satisfactory results that have been acquired, this work still left space to be improved. Future works may include exploring the effectiveness of incremental guidance gain, or control strategies that tackle actuation failure and system uncertainty.
	
	%

	\clearpage
	\bibliographystyle{IEEEtran}
	\bibliography{mybibe_Coevolution}

\end{document}